\theoremstyle{plain}
\newtheorem{theorem}{Theorem}[section]
\newtheorem{lemma}[theorem]{Lemma}
\newtheorem{corollary}[theorem]{Corollary}
\theoremstyle{definition}
\newtheorem{definition}[theorem]{Definition}
\newtheorem{assumption}[theorem]{Assumption}
\theoremstyle{remark}
\newtheorem{remark}[theorem]{Remark}
\title{Federated Learning with Projected Trajectory Regularization}
\author{Tiejin Chen$^*$ \\
  Arizona State University \\
  \texttt{tiejin@asu.edu} \\
  \And
  Yuanpu Cao$^*$ \\
  The Pennsylvania State University \\
  \texttt{ymc5533@psu.edu} \\
  \AND
  Yujia Wang$^*$ \\
  The Pennsylvania State University \\
  \texttt{yjw5427@psu.edu} \\
  \And
  Cho-Jui Hsieh \\
  UCLA \\
  \texttt{chohsieh@cs.ucla.edu} \\
  \And
  Jinghui Chen \\
  The Pennsylvania State University \\
  \texttt{jzc5917@psu.edu} \\  
  }
\begin{document}

\maketitle
\def\thefootnote{*}\footnotetext{Equal contribution.}
\def\thefootnote{\arabic{footnote}}

\begin{abstract}
Federated learning enables joint training of machine learning models from distributed clients without sharing their local data. One key challenge in federated learning is to handle non-identically distributed data across the clients, which leads to deteriorated model training performances. Prior works in this line of research mainly focus on utilizing last-step global model parameters/gradients or the linear combinations of the past model parameters/gradients, which do not fully exploit the potential of global information from the model training trajectory. In this paper, we propose a novel federated learning framework with projected trajectory regularization (FedPTR) for tackling the data heterogeneity issue, which proposes a unique way to better extract the essential global information from the model training trajectory. Specifically, FedPTR allows local clients or the server to optimize an auxiliary (synthetic) dataset that mimics the learning dynamics of the recent model update and utilizes it to project the next-step model trajectory for local training regularization. We conduct rigorous theoretical analysis for our proposed framework under nonconvex stochastic settings to verify its fast convergence under heterogeneous data distributions. Experiments on various benchmark datasets and non-i.i.d. settings validate the effectiveness of our proposed framework.
\end{abstract}

\section{Introduction}

Federated Learning (FL) \cite{mcmahan2017communication, konevcny2016federated} has recently emerged as an attractive learning paradigm where a large number of remote clients (e.g., mobile devices, institutions, organizations) cooperate to jointly learn a machine learning model without data sharing. Specifically, Federated Learning works by performing local training on each client based on their own data, while having the clients communicate their local model updates to a global server \cite{mcmahan2017communication, konevcny2016federated}. 
Specifically, FedAvg \cite{mcmahan2017communication} updates the global model by averaging multiple steps of local stochastic gradient descent (SGD) updates. Though FedAvg has demonstrated empirical success on independent and identically distributed (i.i.d) client data, when facing the more practical non-i.i.d data distributions across the clients, it can lead to diverging updates and severe drift between the local and the global training objective in each client, which largely deteriorates the model training performance \cite{khaled2020tighter, karimireddy2020scaffold,wang2020tackling}.

To tackle this data heterogeneity issue, a line of research mainly focuses on utilizing or regularizing with the last-step global model parameters/gradients or the linear combinations of past model parameters/gradients \cite{li2020federatedprox,karimireddy2020scaffold,gao2022feddc}. 
Specifically, FedProx \cite{li2020federatedprox} makes a simple modification to FedAvg by adding a proximal regularization term in the local training objective to restrict the local updates to be closer to the last-step global model. SCAFFOLD \cite{karimireddy2020scaffold} reduces client drift with a control variate that linearly accumulates past gradients and corrects for the drift. FedDC \cite{gao2022feddc} follows a similar idea as SCAFFOLD with a local drift term and introduces a linear gradient correction loss, attempting to further eliminate the drift at local training. 
However, such methods may not be sufficient enough to fully exploit global information to tackle extreme data heterogeneity. In order to better extract the essential global information from the model training trajectory, we propose a novel federated learning framework with projected trajectory regularization (FedPTR), 
to help guide the local model update. Specifically, during the local training, FedPTR allows
local clients or the server to optimize an auxiliary dataset that mimics the learning dynamics of the recent model update and then utilizes it to project the next-step model trajectory for local training regularization. 
To achieve the above goal and optimize the auxiliary dataset, we adopt the Matching Training Trajectories (MTT) technique \cite{cazenavette2022dataset}, which was originally proposed for the task of dataset distillation. Here we want to emphasize that our usage of the Matching Training Trajectories is actually different from traditional dataset distillation whose goal is to synthesize a small dataset such that models trained on the synthetic dataset can achieve similar test accuracy as if trained on the actual
dataset. Firstly, most works in dataset distillation cannot fit in here as they usually require access to the entire training dataset which is not practical for federated learning.
Some existing works \cite{zhou2020distilled,song2022federated} aim to get around this by letting local clients distill their own data and send it to the server. However, this could incur extra communication costs and raise data privacy concerns. Secondly, the goal of dataset distillation is quite ambitious and usually requires obtaining multiple model training trajectories (or re-starts) in order to obtain barely satisfactory performance. In sharp contrast, here we only require the extracted auxiliary dataset to mimic the learning dynamics of recent model updates for projecting the next-step update, which is a much easier task that can be achieved with only one global model train trajectory.

We summarize our contribution as follows:
 \begin{itemize}[leftmargin=*]
     \item We propose a novel federated learning framework FedPTR that handles extreme data heterogeneity across local clients. The proposed method allows each client or the server to extract essential global information from the observed global model training trajectory and utilize it to project the next-step model trajectory for local training regularization.
     \item We provide rigorous theoretical analysis and prove that FedPTR can achieve a convergence rate of $\mathcal{O}(1/T)$ w.r.t total communication rounds $T$ under stochastic non-convex optimization settings, which matches the best existing results.
     \item We conduct extensive experiments on various benchmark datasets and non-i.i.d settings to demonstrate the effectiveness of our proposed FedPTR in training real-world machine learning models.
 \end{itemize}

\section{Related Work}
\noindent\textbf{Federated Learning.}
Federated Learning \cite{mcmahan2017communication} has been widely applied to all kinds of clients (e.g., mobile devices, institutions, and organizations) to collaboratively train a global model without sharing their private data. However, the heterogeneous data distribution across different devices comes to be one of the major bottlenecks for Federated Learning algorithms. Based on a similar idea as FedAvg \cite{mcmahan2017communication}, several federated optimization methods considered server-level momentum update such as FedAvgM \cite{hsu2019measuring}, FedAGM \cite{kim2022communication}, FedCM \cite{xu2021fedcm} or global-wise adaptive update such as FedAdagrad \cite{reddi2020adaptive}, FedYogi \cite{reddi2020adaptive}, FedAdam \cite{reddi2020adaptive}, FedAMS \cite{wang2022communication} which take advantage of both stable optimization methods and the federated learning paradigm. A similar strategy to improve the stability of federated learning optimization is adopting client-level optimization methods, including FedSAM and MoFedSAM \cite{qu2022generalized}, MOON \cite{li2021model} and FedPD \cite{zhang2020fedpd}. Moreover, personalized federated learning has also been discussed as the potential distribution across clients could be non-i.i.d. Several personalized federated learning approaches include meta-learning-based methods such as Per-FedAvg \citep{fallah2020personalized}, multi-task-learning-based method MOCHA \citep{smith2017federated} and hybrid global and local models such as APFL \citep{deng2020adaptive}, Ditto \citep{li2021ditto}, pFedMe \citep{t2020personalized}, L2SGD \citep{hanzely2020federated}, etc. 

\noindent\textbf{Federated Learning for Heterogeneous Data.}
Recent work showed that adding a proximal or regularization term during training helps mitigate the impact of statistical heterogeneity. FedProx \cite{li2020federatedprox} added a proximate term to help alleviate both system and statistical heterogeneity and improve model stability. FedDyn \cite{acar2021federated} proposed a framework with local dynamic regularization terms to align with the global model. FedAGM \cite{kim2022communication} introduced server gradient acceleration methods with regularization. Some others introduced extra control variables for alleviating data heterogeneity among devices. SCAFFOLD \cite{karimireddy2020scaffold} and Mime \cite{karimireddy2020mime} updated and communicated extra historical control variables during local updates to reduce the variance across clients. FedDC \cite{gao2022feddc} involved both dynamic regularization terms and local drift variables for model correction.

\noindent\textbf{Dataset Distillation.}
Dataset distillation is proposed to learn a small synthetic dataset out of the original large dataset, such that the model trained on this small dataset can have a similar performance as the model trained on the original dataset. \cite{wang2018dataset} first introduced dataset distillation by gradient-based hyperparameter optimization. \cite{bohdal2020flexible} proposed a label distillation method to get soft labels for distilled images with the meta-learning framework.\cite{zhao2021dataset,zhao2021dataset_sa} proposed to match the gradient of the real dataset and the auxiliary dataset, and use data augmentation to boost the performance. \cite{wang2022cafe,zhao2021dataset_dm} considered matching the distribution based on neural networks and largely reduced the memory cost. Recently, Matching Training Trajectories \cite{cazenavette2022dataset} and its variants \cite{cazenavette2022dataset,liu2022dataset,du2022minimizing} are proposed to optimize the synthetic data by minimizing the distance between the synthetically trained parameters and the actual model training trajectories.

\noindent\textbf{Federated Learning with Dataset Distillation.}
Several recent works have tried to combine federated learning with dataset distillation. Specifically, \cite{zhou2020distilled,song2022federated} proposed a one-shot federated learning framework that allows all clients to distill their own dataset by local training and send the synthetic dataset to the server for centralized training. \cite{xiong2022feddm} proposes a multi-shot framework that clients will distill their dataset for every communication round. Note that the above-mentioned works require (distilled) data transmission between clients and server: clients need to send their local distilled data to the server. Yet this incurs extra communication costs and data privacy concerns. In sharp contrast, our algorithm maintains the same basic framework as traditional federated learning methods, where only the model parameters are synced between the server and clients.

\section{Proposed Method} \label{Methods}
In this section, we introduce our proposed
federated learning framework with Projected Trajectory Regularization (FedPTR) for tackling the data heterogeneity issue in federated learning.  

\noindent\textbf{Preliminaries on Federated Learning.} 
Consider a federated system with $N$ total clients participated, the goal of federated learning is to optimize the following objective function: 

\begin{equation}\label{eq:fed-obj}
\min_{\wb \in \RR^d} \sum_{i=1}^{N} \frac{m_i}{M} \mathcal{L}(\mathbf{w}; \mathcal{D}_i),
\end{equation}
where $\mathcal{L}$ denotes the loss function such as Cross Entropy loss, $\mathcal{D}_{i}$ denotes the training data on client $i$.   Let $m_i = |\mathcal{D}_{i}|$ be the cardinality of $\mathcal{D}_i$ and we have $M = \sum_{i=1}^N m_i $ as total size of the training dataset. 
FedAvg \cite{mcmahan2017communication} is designed to solve Eq \eqref{eq:fed-obj} by coordinating multiple devices via a central server. Specifically, in each round of FedAvg, the server will broadcast the current global model $\wb^t$ to the clients.
Each client will locally optimize the corresponding component function in Eq \eqref{eq:fed-obj} and obtain the local model $\mathbf{w}_i^{t+1}$, i.e., $\mathbf{w}_i^{t+1} = \argmin_{\mathbf{w}} \mathcal{L}(\mathbf{w};\mathcal{D}_i)$, through multiple steps of stochastic gradient descent.
The server will then aggregate (weighted average) the local models $\{\mathbf{w}_i^{t+1}\}_{i=1}^N$ to obtain the new global model. 

FedProx \cite{li2020federatedprox} is another representative work which is designed to tackle both the data heterogeneity issue in federated learning and allows flexible performances on each of the local clients.
Specifically, it allows the local objectives to be solved inexactly such that the amount of local computation/communication can be adapted for different clients with different amounts of resources. To formally define this inexactness, \cite{li2020federatedprox} introduced the following definition:
\begin{definition}[$\gamma$-Inexactness]\label{as:gamma-inexact}
Given a function $h(\wb, \wb_0) = \cL (\wb) + \frac{\lambda}{2} \|\wb-\wb_0\|_2^2$ and $\gamma \in [0,1]$, $\wb^*$ is a $\gamma$-inexact solution of $\min_{\wb} h(\wb, \wb_0)$ if $\|\nabla h(\wb^*, \wb_0)\|_2 \leq \gamma \|\nabla h(\wb_0, \wb_0)\|_2$.
\end{definition}
Additionally, FedProx introduced a proximal term to the local objective to limit the local updates to be not too far away from its initial global model:
$
\mathbf{w}_i^{t+1} = \argmin_{\wb}  \mathcal{L}(\mathbf{w};\mathcal{D}_i) + \frac{\lambda}{2} \lVert \mathbf{w} - \mathbf{w}^t \rVert^2_2,
$
where $\lambda$ controls the strength of the proximal term.  We can observe that FedProx handles data heterogeneity by restricting the local updates to stay close to the initial global model.

\subsection{Proposed Method}

Due to the privacy-preserving design of federated learning, each client can only access their own training data. Therefore, the global model training trajectories contain all the information each client could possibly access with regard to other clients' local training data. 
Naturally, most existing works seek help from global models or global variables to mitigate the data heterogeneity issue.
Different from existing works which mainly focus on utilizing only the last-step global model or the linear combination of past global models, FedPTR aims to provide a better way to exploit the global information hidden in the model training trajectories via Matching Training Trajectories (MTT) \cite{cazenavette2022dataset}. More specifically, we aim to extract an auxiliary dataset from the recent model training trajectories that could mimic the learning dynamics of the recent model update.
Thus we can utilize such an auxiliary dataset for projecting the next-step training trajectory and provide better guidance to the local model training.

\noindent\textbf{Extract Global Information via MTT.} We found Matching Training Trajectories (MTT) \cite{cazenavette2022dataset} to be a perfect tool for our global information extraction purpose. 
The general goal of MTT is to learn an informative but small synthetic dataset that approximates the true learning trajectories. 
\begin{algorithm}[H]
\small
   \caption{Matching Training Trajectories (MTT)}
   \label{MTT}
\begin{algorithmic}[1]
   \STATE {\bfseries Input:}  ${\mathbf{w}}^{\text{start}}$,  ${\mathbf{w}}^{\text{end}}$, $\tilde{\mathcal{D}}_i$. 
   \FOR{$h=0$ {\bfseries to} $H-1$}
   \STATE $\hat{\mathbf{w}} = {\mathbf{w}}^{\text{start}}$  
   \FOR{$r=0$ {\bfseries to} $R-1$}
   \STATE $\hat{\mathbf{w}} \leftarrow \hat{\mathbf{w}} - \beta \nabla \mathcal{L}(\hat{\mathbf{w}};\tilde{\mathcal{D}}_i)$
   \ENDFOR
   \STATE $\mathcal{L}_{\text{MTT}}(\tilde{\mathcal{D}}_i) = \lVert \hat{\mathbf{w}} - {\mathbf{w}}^{\text{end}}\rVert^2_2 / \lVert {\mathbf{w}}^{\text{end}} - {\mathbf{w}}^{\text{start}}\rVert^2_2$
   \STATE Updates $\tilde{\mathcal{D}}_i$ with respect to $\mathcal{L}_{\text{MTT}}(\tilde{\mathcal{D}}_i)$
   \ENDFOR
   \STATE {\bfseries Output:} { $\mathcal{\tilde{D}}_i$}
\end{algorithmic}
\end{algorithm}
Moreover, it only requires model parameters from previous training epochs which naturally fit the setting of federated learning. We summarize the detailed MTT process in \cref{MTT}. 
Specifically, we treat the last model in the actual training trajectories (denoted by ${\mathbf{w}}^{\text{end}}$) as the teacher model in MTT and build a student model $\hat{\mathbf{w}}$, which is obtained by multiple steps of gradient descent training on the synthetic dataset $\tilde\cD$ starting from ${\mathbf{w}}^{\text{start}}$ in the trajectory. 
By requiring the student model to stay close to the teacher model, we force MTT to learn a synthetic dataset $\tilde\cD$ that well-represents the recent learning dynamics. In detail, we have the following MTT loss: 
\begin{align*}
\mathcal{L}_{\text{MTT}}(\tilde{\mathcal{D}}) = \lVert \hat{\mathbf{w}} - {\mathbf{w}}^{\text{end}}\rVert^2_2 / \lVert {\mathbf{w}}^{\text{end}} - {\mathbf{w}}^{\text{start}}\rVert^2_2,
\end{align*}
where 
we can optimize $\mathcal{L}_{\text{MTT}}(\tilde{\mathcal{D}})$ over $\tilde{\mathcal{D}} $ via gradient descent to obtain the auxiliary synthetic dataset.

\noindent\textbf{Projected Trajectory Regularization.} 
Once we obtain the auxiliary dataset via MTT, the following step is to project the next-step training trajectory and design our federated learning framework. We summarize our design at \cref{FedPTR} and give more details in the following \footnote{\cref{FedPTR} focused on the full participation case where all clients participate in each round. For partial participation cases, we can simply allow selected clients to perform MTT from the past trajectories it observed.}.  
Specifically, for each client (after certain training rounds $m$), we utilize MTT for extracting the auxiliary dataset $\tilde\cD_i$. Since the auxiliary dataset is designed to mimic the training trajectory from the $(t-m)$-th round global model to the $t$-th round global model, we can obtain our projected next-step model training trajectory  $\tilde{\mathbf{w}}_i^{t+1}$ by further performing several steps of gradient descent training on $\tilde\cD_i$ with learning rate $\eta$. 

Inspired by FedProx, we also design a proximal term for guiding the local training. Different from FedProx, we don't use the last-step global model $\xb^t$, instead, we hope the local update to stay close to our projected training trajectory  $\tilde{\mathbf{w}}_i^{t+1}$: 
\begin{equation}\label{eq:fedptr}
   \min_{\mathbf{w}} \mathcal{L}(\mathbf{w};\mathcal{D}_i) + \frac{\lambda}{2} \lVert \mathbf{w} - \tilde{\mathbf{w}}_i^{t+1} \rVert^2_2,
\end{equation}
where $\lambda$ controls the strength of the proximal term. Through such a design, we utilized the extracted global information from past training trajectories to help guide the local model update.   
To make our algorithm more practical and provide more flexibility for local clients, we also follow FedProx and only require a $\gamma$-inexactness solution during local training.  
After finishing the local computation, clients will send their updated model to the server and the rest steps follow the same procedure as vanilla FedAvg.  

\noindent\textbf{Initialization of the Auxiliary Dataset.}
Previous studies \cite{cazenavette2022dataset}  have shown that using real data samples from true classes can improve the performance of MTT. However, it is not very practical in federated learning with heterogeneous data distribution, as it is nearly impossible for individual clients to possess real data samples from each class. 
Hence, we use an alternative strategy to initialize the auxiliary dataset $\tilde{\mathcal{D}}_i$. Specifically, for those classes that have corresponding samples in the local training data, we directly random sample (with replacement) the desired amount of real data from the local training data. For those classes that do not have samples in the local training data, we adopt the random initialization strategy.  

\begin{algorithm}[htb]
   \caption{Federated Learning with Projected Trajectory Regularization (FedPTR)}
   \label{FedPTR}
   \small
\begin{algorithmic}[1]
   \STATE {\bfseries Input:} $T$, $m$, $\eta$, $\lambda$.
    \STATE Initialize $\mathcal{\tilde{D}}_i$ for all clients
   \FOR{$t=0$ {\bfseries to} $T-1$}
   \FOR{each client $i \in [N]$}
   \STATE Client $i$ receives $\mathbf{w}^{t}$ from the server
   \STATE $\tilde{\mathbf{w}}^{t+1}_i = \wb^{t}$
   \IF{$t > m$}
   \STATE \colorbox{blue!10}{$\mathcal{\tilde{D}}_i \leftarrow $ MTT$( \mathbf{w}^{t-m}, \mathbf{w}^{t}, 
 \mathcal{\tilde{D}}_i)$}
   \FOR{$k=0$ {\bfseries to} ${K}-1$}
   \STATE \colorbox{blue!10}{$\tilde{\mathbf{w}}^{t+1}_i \leftarrow \tilde{\mathbf{w}}^{t+1}_i - \eta \nabla \mathcal{L}(\tilde{\mathbf{w}}^{t+1}_i;\tilde{\mathcal{D}}_i)$}
   \ENDFOR
   \STATE Set $\tilde{\lambda} = \lambda$
   \ELSE
   \STATE Set $\tilde{\lambda} = 0$
   \ENDIF
   
   \STATE Solve $\mathbf{w}_i^{t+1}$ as a  $\gamma$-inexactness minimizer: \\{\scriptsize$\mathbf{w}_i^{t+1} \approx \mathop{\mathrm{argmin}}\limits_{\mathbf{w}} \mathcal{L}(\mathbf{w};\mathcal{D}_i) + \frac{\tilde{\lambda}}{2}\cdot \lVert \mathbf{w} - \tilde{\mathbf{w}}_i^{{t+1}} \rVert_2^2$}
   \STATE Client sends $\mathbf{w}_i^{t+1}$ to server
   \ENDFOR
   \STATE Server aggregates $\{\mathbf{w}_i^{t+1}\}_{i=1}^N$ to get $\mathbf{w}^{t+1}$
    \ENDFOR
\end{algorithmic}
\end{algorithm}

\noindent\textbf{Adaptive Regularization Parameter.}
In Eq \eqref{eq:fedptr}, we use a standard squared $L_2$ norm regularization as the proximal term.  Following common machine learning practices, we usually adopt a constant $\lambda$ as the regularization parameter. However, one thing we notice is that in practice, when applied this $L_2$ norm regularization in neural network models, the model difference $\mathbf{w} - \tilde{\mathbf{w}}_i^{t+1}$ can vary drastically among different convolutional layers\footnote{see more details in Appendix \ref{app:layer}}. 
This motivates us to use a layer-adaptive $\lambda$ rather than a constant. Specifically, let $\wb_{[j]}$ denotes parameters for the $j$-th layer in $\mathbf{w}$ and $\tilde\wb^{t+1}_{i,[j]}$ denotes parameters for the $j$-th layer in $\mathbf{\tilde{w}}$, we can define the adaptive $\lambda_j$ as follows:
$ 
\lambda_j = \lambda/\lVert \wb_{[j]} -\tilde\wb^{t+1}_{i,[j]} \rVert_2.
$In practice, we find that using layer adaptive $\lambda$ can increase the performance and we report all our experimental results in Experiments with the layer adaptive $\lambda$ by default.

\noindent\textbf{Server-Side FedPTR.} One problem of FedPTR presented in \cref{FedPTR} is that each client needs to perform Matching Training Trajectories and update their own auxiliary dataset in each communication round, which would introduce extra computation costs. In order to make our method smoothly accommodate resource-constrained devices (e.g., mobile phones, robots, drone swarms), we also provide a server-side FedPTR algorithm (denoted by FedPTR-S), which only optimizes the auxiliary dataset through MTT in the global server as shown in \cref{FedPTR_server_distill}. 
Specifically, in each communication round, the global server first starts to match training trajectories and optimize a unified $\tilde{\mathbf{w}}^{t+1}$. Then, each client will receive both $\mathbf{w}^{t}$ and $\tilde{\mathbf{w}}^{t+1}$ for local training. Note that different from client-side FedPTR as shown in \cref{FedPTR}, the server-side version initializes the auxiliary dataset using random noise without any real data from local clients. Therefore, our FedPTR-S can preserve data privacy while also making participating devices free from extra computation burdens.
\begin{algorithm}[htb]
   \caption{Server-Side FedPTR (FedPTR-S)}
   \label{FedPTR_server_distill}
   \small
\begin{algorithmic}[1]
   \STATE {\bfseries Input:} $T$, $m$, $\eta$, $\lambda$.
   \STATE Initialize $\mathcal{\tilde{D}}$ for server
   \FOR{$t=0$ {\bfseries to} $T-1$}
   \STATE Server starts to match training trajectories:
   \IF{$t > m$}
   \STATE \colorbox{blue!10}{$\mathcal{\tilde{D}} \leftarrow $ MTT$( \mathbf{w}^{t-m}, \mathbf{w}^{t}, 
 \mathcal{\tilde{D}})$}
   \FOR{$k=0$ {\bfseries to} ${K}-1$}
   \STATE \colorbox{blue!10}{$\tilde{\mathbf{w}}^{t+1} \leftarrow \tilde{\mathbf{w}}^{t+1} - \eta \nabla \mathcal{L}(\tilde{\mathbf{w}}^{t+1};\tilde{\mathcal{D}})$} 
   \ENDFOR
   \STATE Set $\tilde{\lambda} = \lambda$
   \ELSE
   \STATE $\tilde{\mathbf{w}}^{t+1} = \wb^{t}$
   \STATE Set $\tilde{\lambda} = 0$
   \ENDIF

   \FOR{each client $i \in [N]$}
   \STATE Client $i$ receives $\mathbf{w}^{t}$ and $\tilde{\mathbf{w}}^{t+1}$ from server
   
   \STATE Solve $\mathbf{w}_i^{t+1}$ as a  $\gamma$-inexactness minimizer: \\{\scriptsize $\mathbf{w}_i^{t+1} \approx \mathop{\mathrm{argmin}}\limits_{\mathbf{w}} \mathcal{L}(\mathbf{w};\mathcal{D}_i) + \frac{\tilde{\lambda}}{2}\cdot \lVert \mathbf{w} - \tilde{\mathbf{w}}^{{t+1}} \rVert_2^2$}
   \STATE Client sends $\mathbf{w}_i^{t+1}$ to server
   \ENDFOR
   \STATE Server aggregates $\{\mathbf{w}_i^{t+1}\}_{i=1}^N$ to get $\mathbf{w}^{t+1}$
    \ENDFOR
\end{algorithmic}
\end{algorithm}

\section{Theoretical Analysis}

In this section, we provide the theoretical convergence analysis for the proposed FedPTR framework. We first state several assumptions needed for the convergence analysis.

\vspace{2pt}
\begin{assumption}\label{as:smooth}
The loss function on the $i$-th client $\cL(\wb; \cD_i)$ is $L$-smooth, i.e., $\forall \wb,\vb \in \RR^d$, $\big|\cL(\wb; \cD_i)-\cL(\vb; \cD_i)-\langle\nabla \cL(\vb; \cD_i), \wb-\vb\rangle\big| \leq \frac{L}{2}\|\wb-\vb\|_2^2$. This also implies the $L$-gradient Lipschitz condition: $\|\nabla \cL(\vb; \cD_i) - \nabla \cL(\wb; \cD_i)\|_2 \leq L\|\wb-\vb\|_2$. 
\end{assumption}

\vspace{2pt}
\begin{assumption}\label{as:bounded-B}
The gradient of the local loss function on the $i$-th client 
$\nabla \cL(\wb; \cD_i)$ is $B$-locally dissimilar, i.e., $\frac{1}{N} \sum_{i \in [N]} \|\nabla \cL(\wb; \cD_i)\|_2^2 \leq B^2 \|\nabla \cL(\wb)\|_2^2$. 
\end{assumption}
\vspace{2pt}
Assumption \ref{as:smooth} and \ref{as:bounded-B} are standard assumptions in federated learning optimization problems \cite{acar2021federated, li2020federatedprox, karimireddy2020scaffold, xu2021fedcm,gao2022feddc}. 
Assumption \ref{as:bounded-B} describes the dissimilarity between the gradient of local objectives and the global objective.
Note that the commonly used \cite{li2020federatedprox, reddi2020adaptive, wang2022communication}
bounded variance assumption,  $\frac{1}{N} \sum_{i \in [N]}  \|\nabla \cL(\wb; \cD_i) - \nabla \cL(\wb)\|^2 \leq \sigma^2$, also satisfies the condition in Assumption \ref{as:bounded-B}.

\vspace{2pt}
\begin{assumption}\label{as:bounded-g}
The auxiliary loss function on the $i$-th client $\tilde{\cL}(\wb; \tilde{\cD}_i)$ has $\ell_2$-bounded gradient dissimilarity with global objective $\cL(\wb)$, i.e., $\forall \wb \in \RR^d$, it satisfies $\|\nabla \cL(\wb) - \nabla \tilde{\cL}(\wb; \tilde{\cD}_i)\|_2^2 \leq \sigma_d^2.$ 
\end{assumption}
\vspace{2pt}
Assumption \ref{as:bounded-g} measures the gradient dissimilarity between the global loss function and the training loss on the auxiliary dataset $\tilde{\cD}_i$, which implies that the MTT algorithm is actually well functioning: the extracted auxiliary dataset indeed mimics the actual global model training trajectories.

In the following, we will state the theoretical convergence analysis for our proposed FedPTR. For expository purposes, we assume $K = 1$, i.e., performing 1 step of auxiliary model update in each round, though our convergence analysis can also be extended to cases when $K>1$. 
\begin{theorem}[Non-convex FedPTR convergence] \label{thm:convergence}
Under Assumptions \ref{as:smooth}-\ref{as:bounded-g}, assume the local objective functions $\cL(\wb;\cD_i)$ are non-convex, and there exists $L_- >0$, such that $\nabla \cL(\wb;\cD_i) \succeq -L_- \mathbf{I}$ with $\mu= \lambda - L_- >0$. If $\eta \leq \frac{1}{2\lambda}$, $\gamma$ is selected to be sufficient small, $\mu$ and $\lambda$ satisfies
\begin{align}
    & \frac{1}{6\lambda} - \frac{4L^2}{\lambda \mu^2} - \frac{2L^2 }{ \lambda \mu^2} - \frac{1}{\lambda N} - \frac{2L}{\mu^2} - \frac{4L }{\mu^2} \geq 0, 
\end{align}
then the iterations of FedPTR satisfy 
\begin{small}
    
\begin{equation}\label{eq:5.2}
\begin{split}
    \min_{t\in [T]} \EE[\|\nabla \cL(\wb_t)\|_2^2] \leq &\cO\bigg(\frac{\EE[\cL(\wb_{0})] - \EE[\cL(\wb_{T})]}{6\lambda T}\bigg) \\
    & \quad + \cO(\eta^2 \sigma_d^2).
\end{split}
\end{equation}
\end{small}
\end{theorem}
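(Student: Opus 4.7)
The plan is to adapt the FedProx-style proximal analysis of \cite{li2020federatedprox} to the setting where the proximal center is the projected trajectory iterate $\tilde{\mathbf{w}}_i^{t+1}$ rather than the last-round global model $\mathbf{w}^t$. The main new ingredient is controlling the additional drift $\tilde{\mathbf{w}}_i^{t+1}-\mathbf{w}^t$ introduced by the $K=1$ gradient step on the synthetic dataset, which will be done through \cref{as:bounded-g} and which is ultimately responsible for the $\cO(\eta^2\sigma_d^2)$ residual term in \eqref{eq:5.2}.

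First I would introduce the exact proximal minimizer $\bar{\mathbf{w}}_i^{t+1} := \arg\min_{\mathbf{w}} \cL(\mathbf{w};\cD_i) + \tfrac{\lambda}{2}\|\mathbf{w}-\tilde{\mathbf{w}}_i^{t+1}\|_2^2$. Because $\nabla^2 \cL(\wb;\cD_i) \succeq -L_-\mathbf{I}$ and $\mu = \lambda - L_- > 0$, this auxiliary objective is $\mu$-strongly convex, so $\bar{\mathbf{w}}_i^{t+1}$ is unique and its first-order condition yields the clean identity $\bar{\mathbf{w}}_i^{t+1} - \tilde{\mathbf{w}}_i^{t+1} = -\tfrac{1}{\lambda}\nabla\cL(\bar{\mathbf{w}}_i^{t+1};\cD_i)$. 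The $\gamma$-inexactness of \cref{as:gamma-inexact} together with $\mu$-strong convexity then gives $\|\mathbf{w}_i^{t+1} - \bar{\mathbf{w}}_i^{t+1}\|_2 \le (\gamma/\mu)\,\|\nabla \cL(\tilde{\mathbf{w}}_i^{t+1};\cD_i) + \lambda(\tilde{\mathbf{w}}_i^{t+1}-\tilde{\mathbf{w}}_i^{t+1})\|_2$, which can be further bounded in terms of $\|\nabla\cL(\mathbf{w}^t;\cD_i)\|_2$ via $L$-smoothness and the projection-step bound below.

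Second, I would decompose the one-round move (with $\mathbf{w}^{t+1}=\tfrac{1}{N}\sum_i \mathbf{w}_i^{t+1}$) as
\[
\mathbf{w}^{t+1}-\mathbf{w}^t = \underbrace{\tfrac{1}{N}\!\sum_i (\mathbf{w}_i^{t+1}-\bar{\mathbf{w}}_i^{t+1})}_{\text{inexactness error}} \;+\; \underbrace{\tfrac{1}{N}\!\sum_i (\bar{\mathbf{w}}_i^{t+1}-\tilde{\mathbf{w}}_i^{t+1})}_{-\frac{1}{\lambda N}\sum_i\nabla\cL(\bar{\mathbf{w}}_i^{t+1};\cD_i)} \;+\; \underbrace{\tfrac{1}{N}\!\sum_i (\tilde{\mathbf{w}}_i^{t+1}-\mathbf{w}^t)}_{-\eta\cdot\frac{1}{N}\sum_i \nabla\tilde{\cL}(\mathbf{w}^t;\tilde{\cD}_i)}.
\]
For the last piece I would split $\nabla\tilde{\cL}(\mathbf{w}^t;\tilde{\cD}_i) = \nabla\cL(\mathbf{w}^t) + (\nabla\tilde{\cL}(\mathbf{w}^t;\tilde{\cD}_i)-\nabla\cL(\mathbf{w}^t))$, so that \cref{as:bounded-g} converts the mismatch into an $\eta^2\sigma_d^2$-level additive error while the aligned part is absorbed into the descent direction. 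Plugging this into the $L$-smoothness descent inequality $\cL(\wb^{t+1}) \le \cL(\wb^t) + \langle \nabla \cL(\wb^t), \wb^{t+1}-\wb^t\rangle + \tfrac{L}{2}\|\wb^{t+1}-\wb^t\|_2^2$, the middle piece produces the principal $-\tfrac{1}{\lambda}\|\nabla\cL(\mathbf{w}^t)\|_2^2$ term after (i) shifting the evaluation point from $\bar{\mathbf{w}}_i^{t+1}$ back to $\mathbf{w}^t$ via $L$-smoothness and the triangle inequality $\|\bar{\mathbf{w}}_i^{t+1}-\mathbf{w}^t\|\le \|\bar{\mathbf{w}}_i^{t+1}-\tilde{\mathbf{w}}_i^{t+1}\|+\|\tilde{\mathbf{w}}_i^{t+1}-\mathbf{w}^t\|$, and (ii) applying \cref{as:bounded-B} to replace $\tfrac{1}{N}\sum_i\|\nabla\cL(\mathbf{w}^t;\cD_i)\|_2^2$ by $B^2\|\nabla\cL(\mathbf{w}^t)\|_2^2$. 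Young's inequality then routes all cross terms into either $\|\nabla\cL(\mathbf{w}^t)\|_2^2$ (with coefficients $L^2/(\lambda\mu^2)$ and $L/\mu^2$, which is exactly the structure of the displayed condition) or into pure $\eta^2\sigma_d^2$ noise; the $\gamma$-inexactness is absorbed by taking $\gamma$ small enough, and the $1/(\lambda N)$ term tracks the averaging error.

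Finally, collecting coefficients shows that under the displayed inequality on $\mu$ and $\lambda$ the net coefficient in front of $\|\nabla\cL(\mathbf{w}^t)\|_2^2$ is at least $\tfrac{1}{6\lambda}$, so the descent inequality simplifies to $\tfrac{1}{6\lambda}\|\nabla\cL(\mathbf{w}^t)\|_2^2 \le \cL(\mathbf{w}^t)-\cL(\mathbf{w}^{t+1}) + c\,\eta^2\sigma_d^2$; telescoping $t=0,\dots,T-1$ and dividing by $T$ yields \eqref{eq:5.2}. The hard part will be the bookkeeping in the second piece above: translating $\sum_i \|\nabla\cL(\bar{\mathbf{w}}_i^{t+1};\cD_i)\|_2^2$ back to $\|\nabla\cL(\mathbf{w}^t)\|_2^2$ requires chaining the $\bar{\mathbf{w}}_i^{t+1}\!\to\!\tilde{\mathbf{w}}_i^{t+1}\!\to\!\mathbf{w}^t$ displacements through $L$-smoothness at the right place to generate precisely the $4L^2/(\lambda\mu^2)$, $2L^2/(\lambda\mu^2)$, $2L/\mu^2$, and $4L/\mu^2$ terms, and ensuring that the inexactness and the $\eta$-size projection step do not leak into the coefficient of the principal descent term so that the stated inequality on $\lambda,\mu$ is exactly what is needed to close the argument.
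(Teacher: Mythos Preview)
Your proposal is correct and follows essentially the same FedProx-style strategy as the paper: introduce the exact proximal minimizer, use $\mu$-strong convexity together with $\gamma$-inexactness to control $\|\mathbf{w}_i^{t+1}-\mathbf{w}^t\|$, plug into the $L$-smoothness descent inequality, invoke Assumptions~\ref{as:bounded-B} and~\ref{as:bounded-g} to handle the local-to-global gradient comparison and the $\eta^2\sigma_d^2$ residual, and telescope. The only cosmetic difference is that the paper expresses $\mathbf{w}_i^{t+1}-\mathbf{w}^t$ directly via the relation $\nabla h_i(\mathbf{w}_i^{t+1})=\mathbf{e}_i^{t+1}$ and applies the polarization identity $\langle a,b\rangle=\tfrac12(\|a\|^2+\|b\|^2-\|a-b\|^2)$ to the inner-product term, whereas you route through a three-piece decomposition and Young's inequality; the resulting constants are the same.
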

 The proof of Theorem \ref{thm:convergence} is provided in Appendix \ref{sec:proof}. 
 Theorem \ref{thm:convergence} suggests that the gradient norm of FedPTR has an upper bound that contains two terms. The first term vanishes as the communication round $T$ increases and the second term is related to $\sigma_d^2$, i.e., the gradient dissimilarity between global training loss and the training loss on the auxiliary dataset. This implies that if the auxiliary gradient differs significantly from the global gradient, it will hurt FedPTR's performance as larger $\sigma_d$ leads to worse convergence. 

In such a case, the auxiliary gradient cannot well-represent the global gradient direction, hence to obtain a desired convergence performance, it requires a smaller $\eta$ to control the auxiliary model $\tilde{\wb}_{t+1}$ to stay close to the global initial model $\wb_t$ and obtain smaller second term in Eq \eqref{eq:5.2}. 

\begin{remark}
Theorem \ref{thm:convergence} could be extended to our proposed server-side FedPTR (Algorithm \ref{FedPTR_server_distill}) with slighly modifying Assumption \ref{as:bounded-g} to $\|\nabla \cL(\wb) - \nabla \tilde{\cL}(\wb; \tilde{\cD})\|_2^2 \leq \sigma_d^2$, which describes the dissimilarity between the server-side auxiliary loss and the global objective function. 
\end{remark}

\begin{corollary}\label{cor:conv-rate}
If choosing the auxiliary learning rate as $\eta = \Theta (1/\sqrt{T})$, then the convergence rate for Algorithm \ref{FedPTR} satisfies $\min_{t \in [T]} \EE[\|\nabla \cL(\wb^t) \|_2^2] = \cO (1/T)$.
\end{corollary}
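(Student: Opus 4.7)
The plan is to obtain Corollary \ref{cor:conv-rate} as an immediate consequence of Theorem \ref{thm:convergence} by plugging in the prescribed auxiliary learning rate $\eta = \Theta(1/\sqrt{T})$ and tracking how each of the two terms on the right-hand side of Eq.~\eqref{eq:5.2} scales in $T$. No new machinery is required; the argument is essentially one of verification of prerequisites and arithmetic on rates.

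First I would check that the conditions of Theorem \ref{thm:convergence} remain in force under the chosen $\eta$. The structural condition on $\lambda, \mu, L, N$ does not involve $\eta$ and is assumed given, while the step-size condition $\eta \leq \tfrac{1}{2\lambda}$ is automatically satisfied for all $T$ large enough since $\Theta(1/\sqrt{T}) \to 0$. The inexactness parameter $\gamma$ is taken sufficiently small as in the theorem. Thus the bound of Theorem \ref{thm:convergence} applies with the constants $\lambda, \mu, L, N, \sigma_d^2$ all treated as $T$-independent constants.

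Second I would bound the two terms separately. For the first term, $\frac{\EE[\cL(\wb_0)] - \EE[\cL(\wb_T)]}{6\lambda T}$, I would invoke the standard assumption that $\cL$ is bounded below by some $\cL^\star$, so that the numerator is at most the constant $\cL(\wb_0) - \cL^\star$ and the whole quantity is $\cO(1/T)$. For the second term, substituting $\eta^2 = \Theta(1/T)$ gives $\cO(\eta^2 \sigma_d^2) = \cO(\sigma_d^2 / T) = \cO(1/T)$ since $\sigma_d^2$ is a constant under Assumption \ref{as:bounded-g}. Summing the two $\cO(1/T)$ contributions yields $\min_{t \in [T]} \EE[\|\nabla \cL(\wb^t)\|_2^2] = \cO(1/T)$, which is precisely the claim of the corollary.

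There is effectively no hard step here; the only subtlety is in the balancing that motivated the rate choice in the first place. The first term decays like $1/T$ regardless of $\eta$, while the second grows like $\eta^2$, so setting $\eta = \Theta(1/\sqrt{T})$ is the largest step size for which the two terms are of the same order. Any smaller $\eta$ would preserve the $\cO(1/T)$ rate but slow down the auxiliary update unnecessarily, and any larger $\eta$ (e.g.\ constant) would leave a non-vanishing $\cO(\eta^2 \sigma_d^2)$ floor. Making this trade-off explicit is the entire content of the corollary, and it follows from Theorem \ref{thm:convergence} by the direct substitution described above.
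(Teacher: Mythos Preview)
Your proposal is correct and matches the paper's approach: the corollary is obtained directly from Theorem~\ref{thm:convergence} by substituting $\eta = \Theta(1/\sqrt{T})$ into Eq.~\eqref{eq:5.2}, noting that the first term is $\cO(1/T)$ and the second term becomes $\cO(\eta^2\sigma_d^2)=\cO(1/T)$. The paper does not spell out a separate proof for the corollary, so your verification of the step-size condition and the balancing argument is exactly what is implicitly intended.
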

Corollary \ref{cor:conv-rate} suggests that with an appropriately chosen learning rate, FedPTR achieves a convergence rate of $\cO(1/T)$ in stochastic non-convex optimization settings, which matches the convergence rate of federated learning with proximal term such as FedProx \cite{li2020federatedprox} and FedDyn \cite{acar2021federated}.

\section{Experiments} \label{experiments}
In this section, we present comprehensive experimental results to show the effectiveness of FedPTR when competing with other state-of-the-art federated learning methods under various heterogeneous data partitions. We first introduce our experimental settings in \cref{experimental settings} and give a detailed analysis of our experimental results and ablation study in the following sections. We report the average accuracy of the last five global rounds on the test set.
\subsection{Experimental Settings} \label{experimental settings}

\noindent\textbf{Datasets.} We conduct experiments on four benchmark datasets: FashionMNIST \cite{xiao2017fashion}, CIFAR10 \cite{krizhevsky2009learning} CIFAR100 \cite{krizhevsky2009learning}, and TinyImageNet \cite{le2015tiny}. We follow the commonly used mechanism to perform data partition through Dirichlet distribution \cite{wang2020federated}. The parameter $\alpha$ used in Dirichlet sampling determines the non-i.i.d degree. with smaller $\alpha$, non-i.i.d degree becomes higher. In our experiments, we focus on the high-level data heterogeneity with $\alpha = 0.01$ and $\alpha = 0.04$.

\noindent\textbf{Baselines.} We compare our method FedPTR (and FedPTR-S) with five state-of-the-art federated learning methods: FedAvg \cite{mcmahan2017communication}, FedProx \cite{li2020federatedprox}, SCAFFOLD \cite{karimireddy2020scaffold}, FedDyn \cite{acar2021federated}, and FedDC \cite{gao2022feddc}. All the baseline methods except FedAvg contain utilization of the global information especially on the consistency of the global model and local model while FedPTR integrates the global information by training an auxiliary dataset based on the global parameters.

\noindent\textbf{Hyper-parameters and Model Architectures.}  In our experiments, we set the batch size as 500 and the local training epoch as 1. By default, we use SGD with a learning rate $\eta =0.01$ and momentum of $0.5$ as the optimizer. We use 10 images per class for FashionMNIST/CIFAR10 and 2 images per class for CIFAR100.  We set trajectory projection steps $K=5$ and the base $\lambda$ for layer-adaptive regularization parameter as $\lambda = 0.05$. We also tune the hyperparameters in all other baselines for their best performances.
In matching training trajectories part (\cref{MTT}), following \cite{cazenavette2022dataset}, we utilize a learnable $\beta$ with initialized $\beta = 0.01$, trajectory matching outer steps $H = 20$, inner steps $R = 10$, while the rest of the MTT hyper-parameters are set as the default value in \cite{cazenavette2022dataset}. In terms of the model architecture, we follow \cite{cazenavette2022dataset} and adopt ConvNet\cite{gidaris2018dynamic} as the default CNN architecture. 
\begin{table*}[ht!]
    \centering
    \resizebox{0.9\textwidth}{!}{
    \begin{tabular}{|c | c | c c c c c c c|}
    \hline
    Participation Ratio & Dataset &  FedAvg & FedProx & Scaffold & FedDyn & FedDC & FedPTR & FedPTR-S \\\hline\hline
    \multirow{8}{*}{$\boldsymbol{100\%}$}  &\multicolumn{8}{c|}{\textbf{Dirichlet(0.01)}}\\\cline{2-9}
     & FMNIST & 83.69 & 83.75  & 82.21  & 84.36  & 78.23  &$\underline{85.11}$  & $\mathbf{85.48}$\\ \cline{2-9}
     & CIFAR10 & 55.43 & 55.94  & 56.58  & 62.29  & 52.55  & $\mathbf{69.04}$ & $\underline{68.16}$\\ \cline{2-9}
     & CIFAR100 & 37.32 & 37.33  & 42.02  & 41.73  & 39.52  & $\mathbf{44.98}$ & $\underline{44.82}$\\ \cline{2-9}

     & \multicolumn{8}{c|}{\textbf{ Dirichlet(0.04)}}\\\cline{2-9}
     & FMNIST & 85.05 & 84.94  & 84.59  & 85.26  & 82.17   & $\mathbf{86.80}$ & $\underline{85.90}$\\ \cline{2-9}
     & CIFAR10 & 65.44 & 65.46  & 68.98  & 70.26  & 68.93  & $\underline{72.78}$ & $\mathbf{73.07}$\\ \cline{2-9}
     & CIFAR100 & 39.65 & 39.63  & 44.66  & 45.57  & 44.23  & $\underline{45.87}$ & $\mathbf{46.10}$\\ \cline{1-9}

     \multirow{8}{*}{$\boldsymbol{50\%}$} &\multicolumn{8}{c|}{\textbf{Dirichlet(0.01)}}\\\cline{2-9}
     & FMNIST & 81.02 & 81.05  & 75.57  & 79.39  & 73.18  & $\mathbf{81.76}$ & \underline{81.71} \\ \cline{2-9}
     & CIFAR10 & 49.90 & 50.28  & 54.91  & $\underline{56.81}$  & 50.65  & $\mathbf{56.95}$ & 54.24 \\ \cline{2-9}
     & CIFAR100 & 37.07 & 37.04  & 37.35  & 37.61  & 37.98  & $\mathbf{40.11}$  & $\underline{39.49}$ \\ \cline{2-9}

     & \multicolumn{8}{c|}{\textbf{ Dirichlet(0.04)}}\\\cline{2-9}
     & FMNIST & 81.75 & 81.73  & 77.42  & 80.71  & 77.44  & $\underline{82.85}$ & $\mathbf{83.05}$ \\ \cline{2-9}
     & CIFAR10 & 56.84 & 56.80  & 57.58 & 59.45  & 61.31 & $\mathbf{65.60}$ & $\underline{65.06}$ \\ \cline{2-9}
     & CIFAR100 & 38.22 & 38.24  & 38.33  & 40.68  & 40.50  & $\underline{43.63}$ & $\mathbf{44.99}$ \\ \cline{1-9}
     \hline
     \multirow{8}{*}{$\boldsymbol{30\%}$} &\multicolumn{8}{c|}{\textbf{Dirichlet(0.01)}}\\\cline{2-9}
     & FMNIST & 75.89 & 76.40 & 73.05  & 75.09 & 73.09  & $\mathbf{76.79}$ & $\underline{76.78}$ \\ \cline{2-9}
     & CIFAR10 & 41.82 & 43.06  & $\underline{47.92}$  & 45.20 & 44.01 & $\mathbf{48.94}$ & 47.41 \\ \cline{2-9}
     & CIFAR100 & 29.68 & 29.65  & 29.18  & 30.80 & 32.67  & $\mathbf{35.41}$ & $\underline{33.64}$ \\ \cline{2-9}

     & \multicolumn{8}{c|}{\textbf{ Dirichlet(0.04)}}\\\cline{2-9}
     & FMNIST & 77.79 & 78.75  & 77.11  & 78.89 & 76.99  & $\underline{79.03}$ & $\mathbf{79.48}$\\ \cline{2-9}
     & CIFAR10 & 48.50 & 48.52  & 53.14  & 44.21 & 55.72  & $\underline{56.67}$ & $\mathbf{57.75}$\\ \cline{2-9}
     & CIFAR100 & 32.95 & 32.87 &  37.46  & 36.82 & $\underline{37.57}$ & 37.49 & $\mathbf{38.21}$ \\ 
    \hline
    \end{tabular}}

\vskip -0.1in
    \caption{Comparison of test accuracy for FedPTR (FedPTR-S) and other baselines in training different datasets and under different participation ratios with 10 clients. \textbf{Bold} and \underline{underline} represent the best and second best results.}
    \label{tab:main}
\vskip -0.1in
\end{table*}

\vspace{-5pt}
\begin{table*}[ht]
    \centering
    \resizebox{0.9\textwidth}{!}{
    \begin{tabular}{|c | c | c c c c c c c|}
    \hline
    Setting & Dataset &  FedAvg & FedProx & Scaffold & FedDyn & FedDC & FedPTR & FedPTR-S \\\hline\hline
    \multirow{6}{*}{\begin{tabular}[c]{@{}c@{}}40 clients with \\  $\boldsymbol{25\%}$ participation ratio\end{tabular}}  &\multicolumn{8}{c|}{\textbf{Dirichlet(0.01)}}\\\cline{2-9}
     & CIFAR100 & 32.68 & 32.71  & 36.07  & 29.91  & 35.66  & $\mathbf{37.50}$ & $\underline{36.40}$\\ \cline{2-9}

     & \multicolumn{8}{c|}{\textbf{ Dirichlet(0.04)}}\\\cline{2-9}
     & FMNIST& 83.73 & 83.77 & 79.76 & 78.78 & 82.71 & $\mathbf{85.92}$ & $\underline{84.68}$ \\ \cline{2-9}
     & CIFAR10 & 58.00 & 58.06 & 58.32 & 54.11 & 55.13 & $\mathbf{60.30}$ & $\underline{59.73}$\\ \cline{2-9}
     & CIFAR100 & 38.35 & 38.32 & 40.34 & 38.67 & $\mathbf{42.74}$ & $\underline{41.90}$ & 41.11 \\ 
     \hline
    \end{tabular}}
    \vskip -0.1in
    \caption{Comparions of FedPTR (FedPTR-S) with other baselines for partial participation in training different datasets with $40$ clients. \textbf{Bold} and \underline{underline} represent the best and second best results.}
    \label{tab:more_client}
\vskip -0.15in
\end{table*}
\subsection{Comparison of Different Federated Learning Methods on Heterogeneous Data}
We compare our method FedPTR (and FedPTR-S) with five state-of-the-art federated learning optimization methods: FedAvg \cite{mcmahan2017communication}, FedProx \cite{li2020federatedprox}, SCAFFOLD \cite{karimireddy2020scaffold}, FedDyn \cite{acar2021federated}, and FedDC \cite{gao2022feddc}, on varying data heterogeneity, the number of clients, and participation rate. 
\cref{tab:main} shows the comparison results of our method FedPTR with baselines for the scenario of full participation and partial participation with 10 clients. We perform evaluations on two different heterogeneous data partitions, $\text{Dirichlet } \alpha = 0.04$ and $\text{Dirichlet } \alpha = 0.01$. In the full participation setting (participation ratio $100\%$), we observe that our method outperforms previous methods regarding test accuracy on all datasets. In particular, we achieve the absolute improvement of $6.75\%$ compared with the best-performing baseline FedDyn and over $12\%$ improvement over other baselines on CIFAR10 dataset with $\text{Dirichlet } \alpha = 0.01$. 
Moreover, it can be seen that previous methods suffer from performance decreases of up to $16.38\%$ when the data partition becomes more heterogeneous (i.e., from $\text{Dirichlet } \alpha = 0.04$ to $\text{Dirichlet } \alpha = 0.01$),
while our method only reduces the test accuracy by up to $4.91\%$. Such results demonstrate that our method is consistently robust against extremely heterogeneous training data. In partial participation, we evaluate our method and baselines with $50\%$ and $30\%$ participation ratios. We can see that our method obtains relative improvements of $0.9\% \sim 17.7\%$ and $1.2\% \sim 19.3\%$ on $50\%$ participation and $30\%$ participation scenarios compared with FedAvg. These results clearly show the effectiveness of our method for the scenario of partial participation. To further verify our method in the scenario of more clients. We additionally compare FedPTR with baselines in 40 clients and $25\%$ participation ratio as shown in \cref{tab:more_client}. We observe that our method FedPTR (and FedPTR-S) still maintains excellent performance compared with the previous methods. Note that in the setting of Dirichlet $\alpha = 0.01$, we only evaluate our method on CIFAR100 since FashionMNIST and CIFAR10 with fewer categories cannot make each client have training data. We defer the experiments on TinyImageNet in Appendix \ref{ap:tiny}.

\subsection{Reducing the Updates of Auxiliary Dataset}
The original design of FedPTR (FedPTR-S) requires the clients/server to update the auxiliary datasets in each communication round. In this section, we explore how performance scales if we reduce the frequency of the auxiliary dataset updates. In \cref{fig:reduce_updates}, we compare the test accuracy against the communication rounds on CIFAR10 $\text{Dirichlet } \alpha = 0.01$ by performing one MTT update per 1, 4, 10, and 40 rounds. We additionally try a one-shot MTT strategy by only updating the auxiliary dataset once. 
Note that when there is no MTT update over the auxiliary dataset, we still need to perform the trajectory projection step (with the ``old'' auxiliary dataset) for local update regularization. We find that our proposed FedPTR method does not suffer much in performance even when the number of updates of the auxiliary dataset is reduced by $75\%$. Moreover, even the most efficient FedPTR with one-shot MTT can still outperform the other baseline methods such as FedDyn \cite{acar2021federated} on CIFAR10. See more results on FedPTR-S in Appendix \ref{ap:more_updates}.
\begin{figure}[htb]
\vskip -0.1in
\begin{center}
\centerline{\includegraphics[width=0.6\columnwidth]{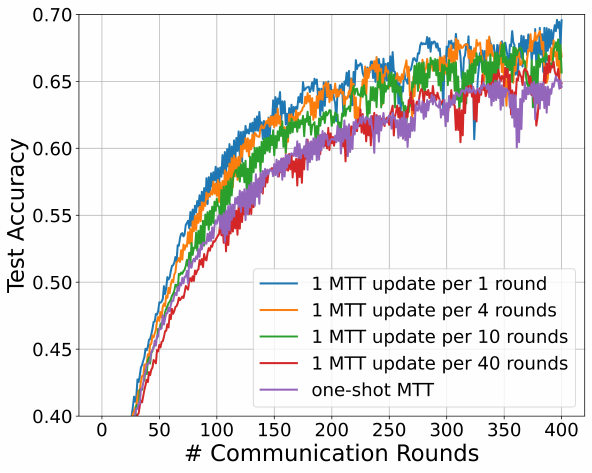}}
\caption{\small Different frequency of MTT updates on CIFAR10 with $\text{Dirichlet } \alpha = 0.01$.}
\label{fig:reduce_updates}
\end{center}
\vskip -0.4in
\end{figure}

\subsection{Changing the Size of Auxiliary Dataset}
We study how the size of the auxiliary dataset affects our final performances. For this, we conduct experiments comparing FedPTR using different sizes of the auxiliary dataset on CIFAR10 and CIFAR100 dataset and show the results in \cref{table:size_cifar10}.
For CIFAR10, we explore 5 different sizes, and the results suggest that a moderate size of the auxiliary dataset leads to the best model training performances.  
When the size is below $100$, the test accuracy is largely affected due to less representation power limited by the auxiliary dataset size. 
On the other hand, when the size grows beyond $100$, there is a clear marginal effect and the influence of the size is not as obvious as before (may even lose performances due to insufficient optimization of the large auxiliary data). For CIFAR100 data, a similar trend can also be observed. Note that using a larger auxiliary dataset leads to a larger computational cost when performing MTT updates in each communication round. In practice, we would be picking a moderate auxiliary dataset size (such as $100$) for the best of both performance and computational cost.

\begin{table}[htb]
\centering
\resizebox{0.6\linewidth}{!}{
    \begin{tabular}{ ccc }
    \toprule
     Dataset & Size of $\mathcal{\tilde{D}}$ & Test Acc \\
     \midrule
     \multirow{5}{*}{CIFAR10}&10 & $65.83$ \\
     & 50 &  $67.02$  \\ 
     & 100 & $69.04$ \\
     & 150 & $68.46$ \\
     & 200 &  $69.00$\\
    \midrule
    \multirow{3}{*}{CIFAR100}&100 & $44.57$ \\
    & 200 & $44.98$\\
    & 300 &  $44.67$\\
    \bottomrule
    \end{tabular}}
\vskip -0.1in
\caption{Comparions of FedPTR with different sizes of the auxiliary dataset with $\text{Dirichlet } \alpha = 0.01$.}
\label{table:size_cifar10}
\vskip -0.2in
\end{table}

\subsection{Quality of Extracted Global Information}
In this subsection, we give a brief analysis on the quality of the extracted global information by the auxiliary dataset. To do so, we measure the similarity between the gradient of the training loss on the auxiliary dataset (denoted by the auxiliary gradient) and the global gradient.
A high similarity between the two gradients could indicate that extracted auxiliary datasets well-approximate the recent learning dynamics and can well project the next-step training trajectory. For computational simplicity, here we directly use the global update direction on the server to approximate the global gradient direction, i.e., $\mathbf{w}^t - \mathbf{w}^{t+1}$. And we use $\mathbf{w}^t - \tilde{\mathbf{w}}^{t+1}_i$ to approximate the auxiliary gradient. To further demonstrate the quality of extracted global information, we also compute the cosine similarity of local gradient $ \mathbf{w}^t - \mathbf{w}^{t+1}_i$ and global gradient. Note that we compute two cosine similarities on vanilla FedAvg, i.e., clients keep the update of the auxiliary dataset and projected training trajectory while  not using any regularization term to avoid the auxiliary dataset influencing the training gradient. We present our results for one randomly chosen client in training a CIFAR100 model with Dirichlet $\alpha=0.01$ in \cref{fig:gradient}. Compared with the local gradient, the auxiliary gradient has a much higher similarity to the global gradient, which suggests that FedPTR can indeed excavate global information from the model training trajectory. Note that the cosine similarity of the auxiliary gradient and global gradient decreases with the increase of the communication rounds. This suggests that it is harder to extract global information at the later phase of model training. Nevertheless, we can still find that the auxiliary gradient shares greater similarity to the global gradient direction compared with the local gradient along the entire training trajectory.
\begin{figure}[htb]
\vskip -0.1in
\begin{center}
\centerline{\includegraphics[width=0.6\columnwidth]{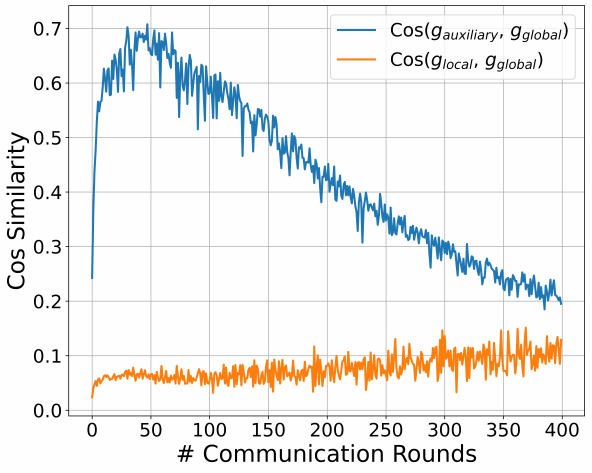}}
\vskip -0.1in
\caption{Comparisons of the gradient cosine similarity. }
\label{fig:gradient}
\end{center}
\vskip -0.35in
\end{figure}

\subsection{Ablation Study}
One of the important hyperparameters of FedPTR is the weight of the regularization term $\lambda$. To test its sensitivity, we consider the non-i.i.d. setting of full participation on CIFAR10 with $\text{Dirichlet } \alpha = 0.01$. We conduct the sensitive test from the candidate set $\{0.01, 0.05, 0.1, 0.2, 1.0\}$. \cref{fig:alpha} shows the convergence plot at varying $\lambda$, while we keep all other hyperparameters of FedPTR consistent. We can observe that the best test accuracy is obtained when $\lambda = 0.1$. The small $\lambda$ (e.g., 0.01) may not have any effect, and the large $\lambda$ (e.g., 1.0) possibly force $\mathbf{w}$ to be close to $\tilde{\mathbf{w}}_i^{{K}}$ thus causing slow convergence. We defer more ablation studies on the trajectory matching/projecting steps, layer-adaptive regularization, and different initialization strategies in Appendix \ref{ap:ab}.

\begin{figure}[htb]
\vskip -0.1in
\begin{center}
\centerline{\includegraphics[width=0.6\columnwidth]{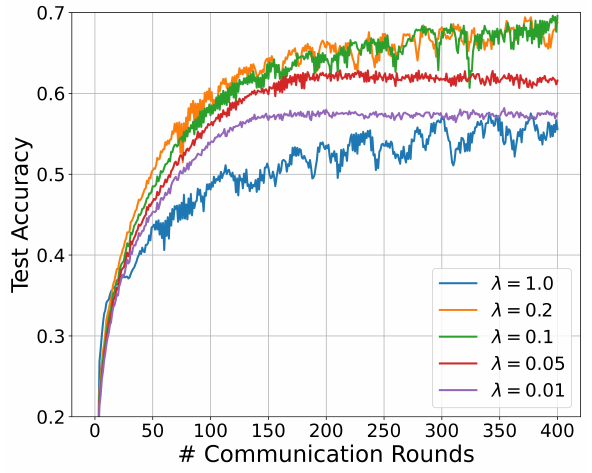}}
\vskip -0.1in
\caption{Sensitivity of $\lambda$ in FedPTR.}
\label{fig:alpha}
\end{center}
\vskip -0.35in
\end{figure}

\section{Conclusion}
Data heterogeneity issue is a critical factor for the practical deployment of Federated Learning algorithms. In this work, we propose a novel federated learning framework FedPTR that handles extreme data heterogeneity across local clients. The proposed method allows each client or the server to extract essential global information from the observed global model training trajectory and utilize it to project the next-step model trajectory for local training regularization. The exhaustive experimental results on various benchmark datasets and different non-i.i.d settings clearly demonstrate the effectiveness of our proposed FedPTR in training real-world machine learning models. One potential limitation of FedPTR is that it cannot be directly applied to language tasks since the application of MTT on language tasks is still an open question. We will leave it as our feature work.

\bibliography{anthology,custom}
\bibliographystyle{acl_natbib}

\appendix

\onecolumn
\section{Proof of Theorem \ref{thm:convergence}.} \label{sec:proof}
For notation convenience, we conclude $\cL(\wb; \cD_i)$ as $\cL_i(\wb)$ and $L(\wb; \tilde{\cD}_i)$ as $\tilde{\cL}_i(\wb)$ in the following.
\begin{lemma}\label{lm:hi-smooth}
    For $h_i(\wb) := \cL_i(\wb) + \frac{\lambda}{2}\|\wb - \wb^t + \eta \nabla \Tilde{\cL}_i (\wb^t) \|^2$, there is $\|\nabla h_i(\wb) - \nabla h_i(\vb) \|\leq (L+\lambda) \|\wb - \vb\|, \forall \wb, \vb \in \RR^d $.
\end{lemma}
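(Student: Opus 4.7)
The plan is to verify $(L+\lambda)$-smoothness of $h_i$ directly from its definition by computing its gradient and then invoking the $L$-smoothness of $\cL_i$ granted by Assumption \ref{as:smooth}.

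First I would write out the gradient of $h_i$ explicitly. Since the quadratic term $\frac{\lambda}{2}\|\wb - \wb^t + \eta \nabla \tilde{\cL}_i(\wb^t)\|^2$ has as its argument an affine function of $\wb$ (note that $\wb^t$ and $\nabla \tilde{\cL}_i(\wb^t)$ are fixed with respect to $\wb$), its gradient is simply $\lambda(\wb - \wb^t + \eta \nabla \tilde{\cL}_i(\wb^t))$. Hence
\begin{equation*}
\nabla h_i(\wb) = \nabla \cL_i(\wb) + \lambda\bigl(\wb - \wb^t + \eta \nabla \tilde{\cL}_i(\wb^t)\bigr).
\end{equation*}

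Next, I would take the difference $\nabla h_i(\wb) - \nabla h_i(\vb)$. The constant shift $-\lambda\wb^t + \lambda\eta \nabla \tilde{\cL}_i(\wb^t)$ cancels, leaving $\nabla h_i(\wb) - \nabla h_i(\vb) = \bigl(\nabla \cL_i(\wb) - \nabla \cL_i(\vb)\bigr) + \lambda(\wb - \vb)$. Applying the triangle inequality gives
\begin{equation*}
\|\nabla h_i(\wb) - \nabla h_i(\vb)\| \leq \|\nabla \cL_i(\wb) - \nabla \cL_i(\vb)\| + \lambda\|\wb - \vb\|.
\end{equation*}
Invoking the $L$-gradient Lipschitz condition from Assumption \ref{as:smooth} on the first term then yields the claimed bound $(L+\lambda)\|\wb - \vb\|$.

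There is no real obstacle here: the result is essentially an instance of the elementary fact that a sum of a smooth function and a $\lambda$-strongly convex quadratic (centered at an arbitrary fixed point) is $(L+\lambda)$-smooth. The only thing to be careful about is observing that the term $\eta \nabla \tilde{\cL}_i(\wb^t)$ inside the squared norm does not depend on $\wb$, so it contributes nothing to the gradient beyond the constant shift that cancels in the difference. The entire argument is a two-line calculation plus one application of Assumption \ref{as:smooth}.
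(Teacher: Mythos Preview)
Your proposal is correct and matches the paper's own proof essentially line for line: compute $\nabla h_i$, observe the constant shift cancels in the difference, apply the triangle inequality, and finish with the $L$-gradient Lipschitz bound from Assumption~\ref{as:smooth}.
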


\begin{proof}[Proof of Lemma \ref{lm:hi-smooth}]
\begin{align}
    & \|\nabla h_i(\wb) - \nabla h_i(\yb) \| \notag\\
    & = \|\nabla \cL_i(\wb) - \nabla \cL_i(\yb) + \lambda \wb - \lambda \yb\| \notag\\
    & \leq \|\nabla \cL_i(\wb) - \nabla \cL_i(\yb)\| + \lambda \|\wb-\yb\| \notag\\
    & \leq (L+\lambda) \|\wb-\yb\|.
\end{align}
This concludes the proof.
\end{proof}

\begin{proof}[Proof of Theorem \ref{thm:convergence}]
    
Recall the notation and the objective function, we have the following for the steps,   
\begin{align}
    \wb_i^{t,k+1} 
    & = \wb_i^{t,k} - \eta (\gb_i^{t,K} + \lambda (\wb_i^{t,k} - \tilde{\mathbf{w}}_i^{t+1})) \notag\\
    & = \wb_i^{t,k} - \eta (\gb_i^{t,K} + \lambda (\wb_i^{t,k} - \wb_{t} + \eta \nabla \tilde{\cL}_i (\wb^t))).
\end{align}
We also have 
\begin{align}
    h_i(\wb) & = f_i(\wb) + \frac{\lambda}{2}\|\wb - \wb^t + \eta \nabla \tilde{\cL}_i (\wb^t) \|^2, \notag\\
    \nabla h_i(\wb) & = \nabla \cL_i(\wb) + \lambda (\wb - \wb_{t} + \eta \nabla \tilde{\cL}_i (\wb^t)).
\end{align}

Denote $\eb_i^{t,k+1}$ as follows
\begin{align}
    \eb_i^{t,k+1} & = \nabla \cL_i(\wb_i^{t,k}) + \lambda (\wb_i^{t,k} - \wb_{t} + \eta \nabla \tilde{\cL}_i (\wb^t)) = \nabla h_i(\wb_i^{t,k}) , 
\end{align}
by the notation of $\gamma$-inexactness, there is 
\begin{align}
    \|\eb_i^{t,k+1}\|^2 & \leq \gamma^2 \|\nabla \cL_i(\wb^t - \eta\nabla \tilde{\cL}_i (\wb^t)) \|^2 \notag\\
    & = \gamma^2 \|\nabla \cL_i(\tilde{\mathbf{w}}_i^{t+1} ) \|^2 \notag\\
    & = \gamma^2 \|\nabla \cL_i(\tilde{\mathbf{w}}_i^{t+1}) - \nabla \cL_i(\wb^t) + \nabla \cL_i(\wb^t) \|^2 \notag\\
    & \leq 2 \gamma^2 \|\nabla \cL_i(\wb^t) \|^2 + 2\gamma^2 L^2 \|\tilde{\mathbf{w}}_i^{t+1} - \wb^t\|^2 \notag\\
    & = 2\gamma^2 \|\nabla \cL_i(\wb^t) \|^2 + 2\gamma^2 L^2 \eta^2\|\nabla \tilde{\cL}_i (\wb^t) \|^2,
\end{align}
where the first equation holds by the definition of distilled model weight $\tilde{\mathbf{w}}_i^{t+1}$, the second inequality holds by the $L$-smoothness of $f_i$, and the last equation is from the definition of $\tilde{\mathbf{w}}_i^{t+1}$. 
Then we have 
\begin{align}
    \EE_t[\wb^{t+1} - \wb^t] & = \frac{1}{N} \sum_{i \in [n]} \EE_t[\wb_i^{t,k} - \wb^t] \notag\\
    & = -\frac{1}{\lambda} \frac{1}{N} \sum_{i \in [n]} \EE_t[\gb_i^{t,K} + \lambda \eta \nabla \tilde{\cL}_i (\wb^t) - \eb_i^{t,K} ].
\end{align}
Define $\mu = \lambda - L_- >0$ [$L_-$] and $\wb^{t+1}_{i,*} = \argmin_{\wb} h_i(\wb; \wb^t)$, then we have 
\begin{align}
    \|\wb^{t+1}_{i,*} - \wb_i^{t,k}\| & \leq \frac{1}{\mu} \|\nabla h_i (\wb^{t+1}_{i,*}) - \nabla h_i (\wb_i^{t,k}) \| \notag\\
    & = \frac{1}{\mu} \|\nabla h_i (\wb_i^{t,k}) \| \notag\\
    & \leq \frac{\gamma}{\mu} \|\nabla h_i (\tilde{\mathbf{w}}_i^{t+1}) \| \notag\\
    & \leq \frac{\gamma}{\mu} \|\nabla h_i (\tilde{\mathbf{w}}_i^{t+1}) - \nabla h_i (\wb^t)\| + \frac{\gamma}{\mu} \|\nabla h_i (\wb^t) \| \notag\\
    & \leq \frac{\gamma}{\mu} (L+\lambda) \|\tilde{\mathbf{w}}_i^{t+1} - \wb^t\| + \frac{\gamma}{\mu} \|\nabla \cL_i (\wb^t) + \lambda \eta \nabla \tilde{\cL}_i (\wb^t) \| \notag\\
    & = \frac{\gamma \eta}{\mu} (L+\lambda) \|\nabla \tilde{\cL}_i (\wb^t)\| + \frac{\gamma}{\mu} \|\nabla \cL_i (\wb^t) + \lambda \eta \nabla \tilde{\cL}_i (\wb^t) \|,
\end{align}
where the first inequality holds by the $\mu$-strong convexity of $h_i$, the first equation holds by $\nabla h_i (\wb^{t+1}_{i,*}) = 0$ since $\wb^{t+1}_{i,*} = \argmin_{\wb} h_i(\wb; \wb^t)$, and the second inequality holds by $\gamma$-inexactness. The third inequality follows the triangle inequality, the forth one follows Lemma \ref{lm:hi-smooth}. 
Similarly by the $\mu$-strongly convexity, we have 
\begin{align}
    \|\wb^{t+1}_{i,*} - \wb^t\| \leq \frac{1}{\mu} \|\nabla h_i(\wb^t)\| = \frac{1}{\mu} \|\nabla \cL_i(\wb^t) + \lambda \eta \nabla \tilde{\cL}_i (\wb^t)\|.
\end{align}
By the triangle inequality, we have 
\begin{align}
    \|\wb_i^{t,k} - \wb^t \| & \leq \|\wb^{t+1}_{i,*} - \wb^t\| + \|\wb^{t+1}_{i,*} - \wb_i^{t,k}\| \notag\\
    & \leq \frac{1+\gamma}{\mu} \|\nabla \cL_i(\wb^t) + \lambda \eta \nabla \tilde{\cL}_i (\wb^t)\| + \frac{\gamma \eta (L+\lambda)}{\mu} \|\nabla \tilde{\cL}_i (\wb^t)\|
\end{align}
and 
\begin{align}
    \|\wb_i^{t,k} - \wb^t \|^2 & \leq 2\|\wb^{t+1}_{i,*} - \wb^t\|^2 + 2\|\wb^{t+1}_{i,*} - \wb_i^{t,k}\|^2 \notag\\
    & \leq \frac{2+4\gamma^2}{\mu^2} \|\nabla \cL_i(\wb^t) + \lambda \eta \nabla \tilde{\cL}_i (\wb^t)\|^2 + \frac{4\gamma^2 \eta^2 (L+\lambda)^2}{\mu^2} \|\nabla \tilde{\cL}_i (\wb^t)\|^2.
\end{align}
Since the objective function $f$ is $L$ smooth, taking conditional expectation at time $t$, (abbreviate $\EE_t$ as $\EE$) we have 
\begin{align}
    \EE_t[\cL(\wb^{t+1})] & \leq \cL(\wb_{t}) + \EE_t\langle \nabla \cL(\wb_{t}), \wb^{t+1} - \wb^t \rangle + \frac{L}{2} \EE_t[\|\wb^{t+1} - \wb^t\|^2] \notag\\
    & \leq \cL(\wb_{t}) + \underbrace{\EE\bigg\langle \nabla \cL(\wb_{t}), -\frac{1}{\lambda} \frac{1}{N} \sum_{i \in [n]} [\gb_i^{t,K} + \lambda \eta \nabla \tilde{\cL}_i (\wb^t) - \eb_i^{t,K}]\bigg\rangle}_{I_1} + \underbrace{\frac{L}{2} \EE[\|\wb^{t+1} - \wb^t\|^2]}_{I_2}.
\end{align}
For term $I_2$, we have 
\begin{align}
    I_2 & = \frac{L}{2} \EE_t[\|\wb^{t+1} - \wb^t\|^2] \notag\\
    & = \frac{L}{2}\EE\bigg[\bigg\|\frac{1}{N} \sum_{i \in [n]} \wb_i^{t,k} - \wb^t \bigg\|^2\bigg] \notag\\
    & \leq \frac{L}{2}\frac{1}{N} \sum_{i \in [n]} \EE[\|\wb_i^{t,k} - \wb^t \|^2] \notag\\
    & \leq \frac{1+2\gamma^2}{\mu^2} \frac{L}{n} \sum_{i \in [n]} \EE[\|\nabla \cL_i(\wb^t) + \lambda \eta \nabla \tilde{\cL}_i (\wb^t)\|^2] + \frac{4\gamma^2 \eta^2 L^2}{\mu^2} \frac{L}{n} \sum_{i \in [n]} \EE[\|\nabla \tilde{\cL}_i (\wb^t)\|^2 ]\notag\\
    & \leq \frac{2+4\gamma^2}{\mu^2} \frac{L}{n} \sum_{i \in [n]} \EE[\|\nabla \cL_i(\wb^t) \|^2] + \bigg(\frac{4\gamma^2 \eta^2 (L+\lambda)^2}{\mu^2} + \frac{(2+4\gamma^2)\lambda^2 \eta^2}{\mu^2}\bigg) \frac{L}{n} \sum_{i \in [n]} \EE[\| \nabla \tilde{\cL}_i (\wb^t)\|^2] \notag\\
    & \leq \frac{LB^2(2+4\gamma^2)}{\mu^2} \EE[\|\nabla \cL_i(\wb^t)\|^2] + \bigg(\frac{4\gamma^2 \eta^2 (L+\lambda)^2}{\mu^2} + \frac{(2+4\gamma^2)\lambda^2 \eta^2}{\mu^2}\bigg)\frac{L}{n} \sum_{i \in [n]} \EE[\| \nabla \tilde{\cL}_i (\wb^t) \mp \nabla\cL(\wb^t)\|^2],
\end{align}
where the first and third inequalities hold by Cauchy inequality, the forth one holds by Assumption \ref{as:bounded-B}, and the last one holds by Assumption \ref{as:bounded-g}. Then we have 
\begin{align}
    & \bigg(\frac{4\gamma^2 \eta^2 (L+\lambda)^2}{\mu^2} + \frac{(2+4\gamma^2)\lambda^2 \eta^2}{\mu^2}\bigg)\frac{L}{n} \sum_{i \in [n]} \EE[\| \nabla \tilde{\cL}_i (\wb^t) \mp \nabla\cL(\wb^t) \|^2] \notag\\
    & \leq 2 \bigg(\frac{4\gamma^2 \eta^2 (L+\lambda)^2}{\mu^2} + \frac{(2+4\gamma^2)\lambda^2 \eta^2}{\mu^2}\bigg)\frac{L}{n} \sum_{i \in [n]} \EE[\| \nabla \tilde{\cL}_i (\wb^t) -  \nabla\cL(\wb^t) \|^2] \notag\\
    & \quad + 2 \bigg(\frac{4\gamma^2 \eta^2 (L+\lambda)^2}{\mu^2} + \frac{(2+4\gamma^2)\lambda^2 \eta^2}{\mu^2}\bigg)\frac{L}{n} \sum_{i \in [n]} \EE[\|\nabla\cL(\wb^t) \|^2] \notag\\
    & \leq \bigg(\frac{8\gamma^2 \eta^2 (L+\lambda)^2}{\mu^2} + \frac{(4+8\gamma^2)\lambda^2 \eta^2}{\mu^2}\bigg) L \sigma_d^2 + \bigg(\frac{8\gamma^2 \eta^2 (L+\lambda)^2}{\mu^2} + \frac{(4+8\gamma^2)\lambda^2 \eta^2}{\mu^2}\bigg) L \EE[\|\nabla\cL(\wb^t) \|^2].
\end{align}
Thus
\begin{align}
    I_2 & \leq \bigg(\frac{2+4\gamma^2}{\mu^2} + \frac{8\gamma^2 \eta^2 (L+\lambda)^2}{\mu^2} + \frac{(4+8\gamma^2)\lambda^2 \eta^2}{\mu^2}\bigg) L \EE[\|\nabla \cL(\wb^t)\|^2] \notag\\
    & \quad + \bigg(\frac{8\gamma^2 \eta^2 (L+\lambda)^2}{\mu^2} + \frac{(4+8\gamma^2)\lambda^2 \eta^2}{\mu^2}\bigg) L \sigma_d^2.
\end{align}
Similarly, it also implies
\begin{align}
    \EE[\|\wb^{t+1} - \wb^t\|^2] & \leq \frac{1}{N} \sum_{i \in [n]}\EE[\|\wb^{t+1}_i - \wb^t\|^2] \notag\\
    & \leq 2\bigg(\frac{2+4\gamma^2}{\mu^2} + \frac{8\gamma^2 \eta^2 (L+\lambda)^2}{\mu^2} + \frac{(4+8\gamma^2)\lambda^2 \eta^2}{\mu^2}\bigg) \EE[\|\nabla \cL(\wb^t)\|^2] \notag\\
    & \quad + 2\bigg(\frac{8\gamma^2 \eta^2 (L+\lambda)^2}{\mu^2} + \frac{(4+8\gamma^2)\lambda^2 \eta^2}{\mu^2}\bigg) \sigma_d^2.
\end{align}
For term $I_1$, we have 
\begin{align}
    I_1 & = \EE \bigg\langle \nabla \cL(\xb_{t}), -\frac{1}{\lambda} \frac{1}{N} \sum_{i \in [n]} [\gb_i^{t,K} + \lambda \eta \nabla \tilde{\cL}_i (\wb^t) - \eb_i^{t,K}] + \frac{1}{\lambda} \nabla \cL(\wb^t) - \frac{1}{\lambda} \nabla \cL(\wb^t) \bigg\rangle \notag\\
    & = -\frac{1}{\lambda} \EE[\|\nabla \cL(\xb_{t})\|^2] + \EE \bigg\langle \nabla \cL(\xb_{t}), -\frac{1}{\lambda} \frac{1}{N} \sum_{i \in [n]} [ \nabla \cL_i(\wb_i^{t,K}) + \lambda \eta \nabla \tilde{\cL}_i (\wb^t) - \eb_i^{t,K}] + \frac{1}{\lambda} \nabla \cL (\wb^t) \bigg\rangle \notag\\
    & = -\frac{1}{\lambda} \EE[\|\nabla \cL(\xb_{t})\|^2] + \EE\bigg\langle \nabla \cL(\xb_{t}), -\frac{1}{\lambda} \frac{1}{N} \sum_{i \in [n]} [\nabla \cL_i(\wb_i^{t,K}) + \lambda \eta \nabla \tilde{\cL}_i (\wb^t) - \eb_i^{t,K}] + \frac{1}{\lambda} \frac{1}{N} \sum_{i \in [n]} \nabla \cL_i (\wb^t) \bigg\rangle \notag\\
    & = -\frac{1}{\lambda} \EE[\|\nabla \cL(\xb_{t})\|^2] + \frac{1}{2\lambda} \EE[\|\nabla \cL(\xb_{t})\|^2] \notag\\
    & \quad + \underbrace{\frac{1}{2\lambda} \EE\bigg[\bigg\|- \frac{1}{N} \sum_{i \in [n]} [\nabla \cL_i(\wb_i^{t,K}) + \lambda \eta \nabla \tilde{\cL}_i (\wb^t) - \eb_i^{t,K}] + \nabla \cL (\wb^t)\bigg\|^2\bigg]}_{I_3} \notag\\
    & \quad - \frac{1}{2\lambda} \EE\bigg[\bigg\|- \frac{1}{N} \sum_{i \in [n]} [\nabla \cL_i(\wb_i^{t,K}) + \lambda \eta \nabla \tilde{\cL}_i (\wb^t) - \eb_i^{t,K}]\bigg\|^2\bigg],
\end{align}
where the second equation holds due to the unbiasedness of stochastic gradient $\gb_i^{t,K}$, the third equation holds by the definition of global objective function, the forth one holds by the fact of $\langle \ab, \bb \rangle = \frac{1}{2}[\|\ab\|^2 + \|\bb\|^2 - \|\ab - \bb\|^2]$. For $I_3$ term, we have 
\begin{align}
    I_3 & = \frac{1}{2\lambda} \EE\bigg[\bigg\|- \frac{1}{N} \sum_{i \in [n]} \nabla \cL_i(\wb_i^{t,K}) - \frac{1}{N} \sum_{i \in [n]}\lambda \eta \nabla \tilde{\cL}_i (\wb^t) + \frac{1}{N} \sum_{i \in [n]}\eb_i^{t,K} + \frac{1}{N} \sum_{i \in [n]} \nabla \cL_i (\wb^t)\bigg\|^2\bigg] \notag\\
    & \leq \frac{1}{\lambda} \EE\bigg[\bigg\|\frac{1}{N} \sum_{i \in [n]} \nabla \cL_i (\wb^t) - \frac{1}{N} \sum_{i \in [n]} \nabla \cL_i(\wb_i^{t,K}) \bigg\|^2\bigg] + \frac{1}{\lambda} \EE\bigg[\bigg\|- \frac{1}{N} \sum_{i \in [n]}\lambda \eta \nabla \tilde{\cL}_i (\wb^t) + \frac{1}{N} \sum_{i \in [n]}\eb_i^{t,K} \bigg\|^2\bigg] \notag\\
    & \leq \frac{L^2}{\lambda} \frac{1}{N} \sum_{i \in [n]} \EE[\| \wb_i^{t,K} - \wb^t \|^2] + \frac{2}{\lambda} \EE\bigg[\bigg\|\frac{\lambda \eta}{n} \sum_{i \in [n]} \nabla \tilde{\cL}_i (\wb^t) \bigg\|^2 \bigg] + \frac{2}{\lambda} \frac{1}{N} \sum_{i \in [n]} \EE[\|\eb_i^{t,K} \|^2] \notag\\
    & \leq \frac{L^2}{\lambda} \bigg[\bigg(\frac{4+8\gamma^2}{\mu^2} + \frac{16\gamma^2 \eta^2 (L+\lambda)^2}{\mu^2} + \frac{(8+16\gamma^2)\lambda^2 \eta^2}{\mu^2}\bigg) \EE[\|\nabla \cL(\wb^t)\|^2] \notag\\
    & \quad + \bigg(\frac{16\gamma^2 \eta^2 (L+\lambda)^2}{\mu^2} + \frac{(8+16\gamma^2)\lambda^2 \eta^2}{\mu^2}\bigg) \sigma_d^2 \bigg]\notag\\
    & \quad + \frac{2\lambda \eta^2}{n} \sum_{i \in [n]} \EE[\|\nabla \tilde{\cL}_i (\wb^t) \|^2] + \frac{2}{\lambda} \frac{1}{N} \sum_{i \in [n]} [2\gamma^2 \EE[\|\nabla \cL_i(\wb^t)\|^2] + 2 \gamma^2 L^2 \eta^2 \EE[\|\nabla \tilde{\cL}_i (\wb^t)\|^2]] \notag\\
    & \leq \frac{L^2}{\lambda} \bigg[\bigg(\frac{4+8\gamma^2}{\mu^2} + \frac{16\gamma^2 \eta^2 (L+\lambda)^2}{\mu^2} + \frac{(8+16\gamma^2)\lambda^2 \eta^2}{\mu^2}\bigg) \EE[\|\nabla \cL(\wb^t)\|^2] \notag\\
    & \quad + \bigg(\frac{16\gamma^2 \eta^2 (L+\lambda)^2}{\mu^2} + \frac{(8+16\gamma^2)\lambda^2 \eta^2}{\mu^2}\bigg) \sigma_d^2 \bigg]\notag\\
    & \quad + \bigg(\frac{2\lambda \eta^2}{n} + \frac{4\gamma^2 L^2 \eta^2}{\lambda n}\bigg) \sum_{i \in [n]} \EE[\|\nabla \tilde{\cL}_i (\wb^t) \|^2] + \frac{4\gamma^2}{\lambda} \frac{1}{N} \sum_{i \in [n]} \EE[\|\nabla \cL_i(\wb^t)\|^2],
\end{align}
where the first and last inequalities hold by Cauchy inequality, and the second one holds by $L$-smoothness and Cauchy inequality as well. The third one is due to the bound for $\|\wb_i^{t,K} - \wb^t\|^2$ and the $\gamma$-inexactness of $\eb_i^{t,K}$. Then by the bounded gradient dissimilarity and bounded distilled gradient, organizing pieces, we have 
\begin{align}
    I_3 & \leq \frac{L^2}{\lambda} \bigg[\bigg(\frac{4+8\gamma^2}{\mu^2} + \frac{16\gamma^2 \eta^2 (L+\lambda)^2}{\mu^2} + \frac{(8+16\gamma^2)\lambda^2 \eta^2}{\mu^2}\bigg) \EE[\|\nabla \cL(\wb^t)\|^2] \notag\\
    & \quad + \bigg(\frac{16\gamma^2 \eta^2 (L+\lambda)^2}{\mu^2} + \frac{(8+16\gamma^2)\lambda^2 \eta^2}{\mu^2}\bigg) \sigma_d^2 \bigg]\notag\\
    & \quad + \bigg(\frac{2\lambda \eta^2}{n} + \frac{4\gamma^2 L^2 \eta^2}{\lambda n}\bigg) \sum_{i \in [n]} \EE[\|\nabla \tilde{\cL}_i (\wb^t) \|^2] + \frac{4\gamma^2}{\lambda} \frac{1}{N} \sum_{i \in [n]} \EE[\|\nabla \cL_i(\wb^t)\|^2] \notag\\
    & \leq \frac{L^2}{\lambda} \bigg[\bigg(\frac{4+8\gamma^2}{\mu^2} + \frac{16\gamma^2 \eta^2 (L+\lambda)^2}{\mu^2} + \frac{(8+16\gamma^2)\lambda^2 \eta^2}{\mu^2}\bigg) \EE[\|\nabla \cL(\wb^t)\|^2] \notag\\
    & \quad + \bigg(\frac{16\gamma^2 \eta^2 (L+\lambda)^2}{\mu^2} + \frac{(8+16\gamma^2)\lambda^2 \eta^2}{\mu^2}\bigg) \sigma_d^2 \bigg] + \bigg(4\lambda \eta^2 + \frac{8\gamma^2 L^2 \eta^2}{\lambda}\bigg) \sigma_d^2\notag\\
    & \quad + \bigg(\frac{4\lambda \eta^2}{n} + \frac{8\gamma^2 L^2 \eta^2}{\lambda n} \bigg)\sum_{i \in [n]} \EE[\|\nabla \cL(\wb^t)\|^2] + \frac{4\gamma^2}{\lambda n} \sum_{i \in [n]} \EE[\|\nabla \cL_i(\wb^t)\|^2]\notag\\
    & \leq \bigg[\frac{L^2}{\lambda} \bigg(\frac{4+8\gamma^2}{\mu^2} + \frac{16\gamma^2 \eta^2 (L+\lambda)^2}{\mu^2} + \frac{(8+16\gamma^2)\lambda^2 \eta^2}{\mu^2}\bigg) +   \bigg(\frac{4\lambda \eta^2}{n} + \frac{8\gamma^2 L^2 \eta^2}{\lambda n} \bigg) + \frac{4\gamma^2}{\lambda n} B^2 \bigg]\EE[\|\nabla \cL(\wb^t)\|^2] \notag\\
    & \quad + \bigg[\frac{L^2}{\lambda} \bigg(\frac{16\gamma^2 \eta^2 (L+\lambda)^2}{\mu^2} + \frac{(8+16\gamma^2)\lambda^2 \eta^2}{\mu^2}\bigg) + \bigg(4\lambda \eta^2 + \frac{8\gamma^2 L^2 \eta^2}{\lambda}\bigg)\bigg] \sigma_d^2.
\end{align}

Therefore, by merging pieces together, we have 
\begin{align}
    & \EE[\cL(\xb_{t+1})] - \EE[\cL(\xb_{t})] \notag\\
    & \leq \bigg[-\frac{1}{2\lambda} + \frac{L^2}{\lambda} \bigg(\frac{4+8\gamma^2}{\mu^2} + \frac{16\gamma^2 \eta^2 (L+\lambda)^2}{\mu^2} + \frac{(8+16\gamma^2)\lambda^2 \eta^2}{\mu^2}\bigg) + \frac{4\lambda \eta^2}{n} + \frac{8\gamma^2 L^2 \eta^2}{\lambda n} + \frac{4\gamma^2 B^2 }{\lambda n} \notag\\
    & \quad + \frac{L (2+4\gamma^2)}{\mu^2} + \frac{8L \gamma^2 \eta^2 (L+\lambda)^2}{\mu^2} + \frac{L (4+8\gamma^2)\lambda^2 \eta^2}{\mu^2}\bigg] \EE\|\nabla \cL(\wb^t)\|^2 \notag\\
    & \quad + \bigg[\frac{L^2}{\lambda} \bigg(\frac{16\gamma^2 \eta^2 (L+\lambda)^2}{\mu^2} + \frac{(8+16\gamma^2)\lambda^2 \eta^2}{\mu^2}\bigg) + 4\lambda \eta^2 + \frac{8\gamma^2 L^2 \eta^2}{\lambda} \notag\\
    & \quad + L\bigg(\frac{8\gamma^2 \eta^2 (L+\lambda)^2}{\mu^2} + \frac{(4+8\gamma^2)\lambda^2 \eta^2}{\mu^2}\bigg) \bigg] \sigma_d^2,
\end{align}
summing up from $t=0$ to $T-1$, we have 
\begin{align}
    & \EE[\cL(\xb_{T})] - \EE[\cL(\xb_{0})]\notag\\
    & \leq \bigg[-\frac{1}{2\lambda} + \frac{L^2}{\lambda} \bigg(\frac{4+8\gamma^2}{\mu^2} + \frac{16\gamma^2 \eta^2 (L+\lambda)^2}{\mu^2} + \frac{(8+16\gamma^2)\lambda^2 \eta^2}{\mu^2}\bigg) + \frac{4\lambda \eta^2}{n} + \frac{8\gamma^2 L^2 \eta^2}{\lambda n} + \frac{4\gamma^2 B^2 }{\lambda n} \notag\\
    & \quad + \frac{L (2+4\gamma^2)}{\mu^2} + \frac{8L \gamma^2 \eta^2 (L+\lambda)^2}{\mu^2} + \frac{L (4+8\gamma^2)\lambda^2 \eta^2}{\mu^2}\bigg] \sum_{t=0}^{T-1}\EE[\|\nabla \cL(\wb^t)\|^2] \notag\\
    & \quad + T\bigg[\frac{L^2}{\lambda} \bigg(\frac{16\gamma^2 \eta^2 (L+\lambda)^2}{\mu^2} + \frac{(8+16\gamma^2)\lambda^2 \eta^2}{\mu^2}\bigg) + 4\lambda \eta^2 + \frac{8\gamma^2 L^2 \eta^2}{\lambda} \\\notag
    & \quad + L\bigg(\frac{8\gamma^2 \eta^2 (L+\lambda)^2}{\mu^2} + \frac{(4+8\gamma^2)\lambda^2 \eta^2}{\mu^2}\bigg) \bigg] \sigma_d^2,
\end{align}
then 
\begin{align}
    & \bigg[\frac{1}{2\lambda} - \frac{L^2}{\lambda} \bigg(\frac{4+8\gamma^2}{\mu^2} + \frac{16\gamma^2 \eta^2 (L+\lambda)^2}{\mu^2} + \frac{(8+16\gamma^2)\lambda^2 \eta^2}{\mu^2}\bigg) - \frac{4\lambda \eta^2}{n} - \frac{8\gamma^2 L^2 \eta^2}{\lambda n} - \frac{4\gamma^2 B^2 }{\lambda n} \notag\\
    & \quad - \frac{L (2+4\gamma^2)}{\mu^2} - \frac{8L \gamma^2 \eta^2 (L+\lambda)^2}{\mu^2} - \frac{L (4+8\gamma^2)\lambda^2 \eta^2}{\mu^2}\bigg] \sum_{t=0}^{T-1}\EE[\|\nabla \cL(\wb^t)\|^2] \notag\\
    & \leq \EE[\cL(\xb_{0})] - \EE[\cL(\xb_{T})] + T\bigg[\frac{L^2}{\lambda} \bigg(\frac{16\gamma^2 \eta^2 (L+\lambda)^2}{\mu^2} + \frac{(8+16\gamma^2)\lambda^2 \eta^2}{\mu^2}\bigg) \notag\\
    & \quad + 4\lambda \eta^2 + \frac{8\gamma^2 L^2 \eta^2}{\lambda} + L\bigg(\frac{8\gamma^2 \eta^2 (L+\lambda)^2}{\mu^2} + \frac{(4+8\gamma^2)\lambda^2 \eta^2}{\mu^2}\bigg) \bigg] \sigma_d^2,
\end{align}
hence we have 
\begin{align}
    & \frac{1}{T}\sum_{t=0}^{T-1}\EE[\|\nabla \cL(\wb^t)\|^2] \notag\\
    & \leq \bigg[\frac{1}{2\lambda} - \frac{L^2}{\lambda} \bigg(\frac{4+8\gamma^2}{\mu^2} + \frac{16\gamma^2 \eta^2 (L+\lambda)^2}{\mu^2} + \frac{(8+16\gamma^2)\lambda^2 \eta^2}{\mu^2}\bigg) - \frac{4\lambda \eta^2}{n} - \frac{8\gamma^2 L^2 \eta^2}{\lambda n} - \frac{4\gamma^2 B^2 }{\lambda n} \notag\\
    & \quad - \frac{L (2+4\gamma^2)}{\mu^2} - \frac{8L \gamma^2 \eta^2 (L+\lambda)^2}{\mu^2} - \frac{L (4+8\gamma^2)\lambda^2 \eta^2}{\mu^2}\bigg]^{-1} \frac{\EE[\cL(\xb_{0})] - \EE[\cL(\xb_{T})]}{T} \notag\\
    & \quad + \bigg[\frac{L^2}{\lambda} \bigg(\frac{16\gamma^2 \eta^2 (L+\lambda)^2}{\mu^2} + \frac{(8+16\gamma^2)\lambda^2 \eta^2}{\mu^2}\bigg) + 4\lambda \eta^2 + \frac{8\gamma^2 L^2 \eta^2}{\lambda} \notag\\
    & \quad + L\bigg(\frac{8\gamma^2 \eta^2 (L+\lambda)^2}{\mu^2} + \frac{(4+8\gamma^2)\lambda^2 \eta^2}{\mu^2}\bigg) \bigg] \sigma_d^2.
\end{align}
If we assume that $\eta \leq \frac{1}{2\lambda }$ and $\gamma \to 0$, then we have 
\begin{align}
    & \bigg[\frac{1}{2\lambda} - \frac{L^2}{\lambda} \bigg(\frac{4+8\gamma^2}{\mu^2} + \frac{16\gamma^2 \eta^2 (L+\lambda)^2}{\mu^2} + \frac{(8+16\gamma^2)\lambda^2 \eta^2}{\mu^2}\bigg) - \frac{4\lambda \eta^2}{n} - \frac{8\gamma^2 L^2 \eta^2}{\lambda n} - \frac{4\gamma^2 B^2 }{\lambda n} \notag\\
    & \quad - \frac{L (2+4\gamma^2)}{\mu^2} - \frac{8L \gamma^2 \eta^2 (L+\lambda)^2}{\mu^2} - \frac{L (4+8\gamma^2)\lambda^2 \eta^2}{\mu^2}\bigg] \notag\\
    & = \frac{1}{2\lambda} - \frac{4L^2}{\lambda \mu^2} - \frac{8L^2 \lambda \eta^2}{ \mu^2} - \frac{4\lambda \eta^2}{n} - \frac{2L}{\mu^2} - \frac{4L \lambda^2 \eta^2}{\mu^2}- \cO(\gamma^2) \notag\\
    & \geq \frac{1}{2\lambda} - \frac{4L^2}{\lambda \mu^2} - \frac{2L^2 }{ \lambda \mu^2} - \frac{1}{\lambda n} - \frac{2L}{\mu^2} - \frac{4L }{\mu^2} - \cO(\gamma^2)> 0,
\end{align}
and 
\begin{align}
    & \frac{8L^2 \gamma^2}{\lambda \mu^2} + \frac{16\gamma^2 L^2 \eta^2 (L+\lambda)^2}{\lambda \mu^2} + \frac{16\gamma^2 L^2 \eta^2 \lambda^2}{\lambda \mu^2} + \frac{8\gamma^2 L^2 \eta^2}{\lambda n} + \frac{4\gamma^2 B^2 }{\lambda n} + \frac{4 \gamma^2L}{\mu^2} + \frac{8L \gamma^2 \eta^2 (L+\lambda)^2}{\mu^2} + \frac{8L\gamma^2 \lambda^2 \eta^2}{\mu^2} \notag\\
    & \leq \frac{8L^2 \gamma^2}{\lambda \mu^2} + \frac{4\gamma^2 L^2 (L+\lambda)^2}{\lambda^3 \mu^2} + \frac{4\gamma^2 L^2 }{\lambda \mu^2} + \frac{2\gamma^2 L^2 }{\lambda^3 n} + \frac{4\gamma^2 B^2 }{\lambda n} + \frac{4 \gamma^2L}{\mu^2} + \frac{2L \gamma^2 (L+\lambda)^2}{\lambda^2 \mu^2} + \frac{2L\gamma^2}{\mu^2} = \cO(\gamma^2),
\end{align}
hence we need the condition of 
\begin{align}
    &\cO(\gamma^2) \leq \frac{1}{6\lambda}  \text{ and } \frac{1}{6\lambda} - \frac{4L^2}{\lambda \mu^2} - \frac{2L^2 }{ \lambda \mu^2} - \frac{1}{\lambda n} - \frac{2L}{\mu^2} - \frac{4L }{\mu^2} \geq 0 \\
    & \gamma \leq 1/\sqrt{C \lambda},
\end{align}
where $C$ is a numerical constant irrelevant to $L, B, \gamma, \lambda, \eta$, etc.
To obtain the final convergence rate. Reorganize items, we have 
\begin{align}
    \min_{t\in [T]} \EE[\|\nabla \cL(\wb_t)\|_2^2] 
    & \leq \cO\bigg(\frac{\EE[\cL(\wb_{0})] - \EE[\cL(\wb_{T})]}{6\lambda T}\bigg) + \cO(\eta^2 \sigma_d^2).
\end{align}
This concludes the proof.
\end{proof}

\newpage

\section{Additional Experiments}
\subsection{More Experiments on Layer Adaptive $\lambda$}
\label{app:layer}
 In \cref{fig:norm_bar}, we plot the $L_2$ norms of the four different convolutional layers in ConvNet\cite{gidaris2018dynamic}, with the model trained under a 10-clients federated learning setup, and each client's training data sampled from the original CIFAR10 dataset following Dirichlet distribution ($\alpha=0.01$). We can observe that the $L_2$ norm of $\mathbf{w} - \tilde{\mathbf{w}}_i^{t+1}$ between layers varies and it also changes along with different phases of training. For example, at the beginning of training, the fourth layer has the largest $L_2$ norm while the second layer has the largest $L_2$ norm when the training is about to end. 
 Such a phenomenon indicates that it might be helpful to use a layer-adaptive $\lambda$ rather than a constant. We compare our layer adaptive $\lambda$  with the standard fixed $\lambda$. We report results on CIFAR10 and CIFAR100 with the parameter of Dirichlet distribution $\alpha=0.01$ and $0.04$ as shown in \cref{table:lambda}. We can observe that our FedPTR adopting layer adaptive $\lambda$ achieves better performances compared with the fixed $\lambda$ setting. More experiments compared with layer adaptive $\lambda$ and fixed $\lambda$ can be found in the following section.
 
 \begin{figure}[htb]
\begin{center}
\centerline{\includegraphics[width=0.35\columnwidth]{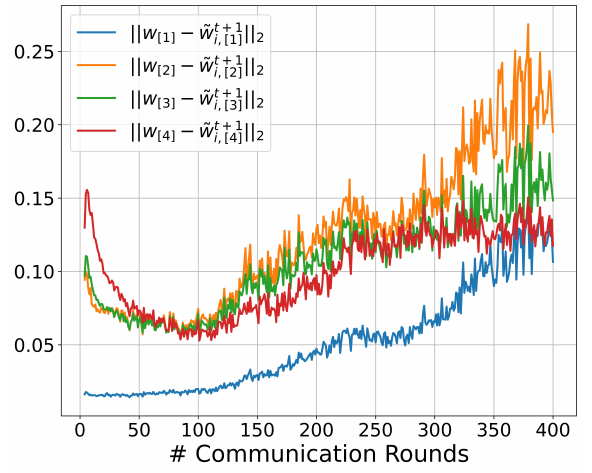}}
\caption{Comparisons of layer-wise $L_2$ norm $\big\lVert \wb_{[j]} -\tilde\wb^{t+1}_{i,[j]} \big\rVert_2$ against the communication rounds, where $j$ denotes $j$-th layer. The layer-wise $L_2$ norm values vary significantly with different layers and communication rounds. }
\label{fig:norm_bar}
\end{center}
\vskip -0.2in
\end{figure}

\begin{table}[htb]
\begin{center}
    \small
    \begin{tabular}{ c | c c  | c c  }
    \toprule
    \multirow{2}{*}{Reg. param.} & \multicolumn{2}{c|}{$\text{Dirichlet } \alpha = 0.01$} & \multicolumn{2}{c}{$\text{Dirichlet } \alpha = 0.04$} \\
    \cmidrule(rl){2-3} \cmidrule(rl){4-5}
    &  CIFAR 10 & CIFAR 100 & CIFAR 10 & CIFAR 100 \\
    \midrule
    Fixed $\lambda$ & 62.10 & 41.26 & 67.68 & 42.41 \\ 
    Layer-adaptive $\lambda$ & 69.04 & 44.98 & 72.78 & 45.87 \\
    \bottomrule
    \end{tabular}
\end{center}
\vskip -0.05in
\caption{Comparisons of layer-adaptive $\lambda$ with fixed $\lambda$ in FedPTR on CIFAR10 and CIFAR100 dataset.}
\label{table:lambda}
\end{table}
\subsection{More Results of Ablation Study}
\label{ap:ab}
\noindent\textbf{Impact of trajectory projecting steps.}
To test sensitivity of the trajectory projection steps $K$, we consider CIIFAR10 with Dirichlet $0.01$. \cref{fig:tilde} shows convergence plots for different $K$ configurations. We can observe that our method is relatively stable when $k=3$ and $k=5$. However, setting $k=1$ would lead to slower convergence and a slight performance decrease, and setting $k=10$ could accelerate the training in the early stage but result in instability and make it difficult to converge.
\begin{figure}[htb]
\begin{center}
\centerline{\includegraphics[width=0.35\columnwidth]{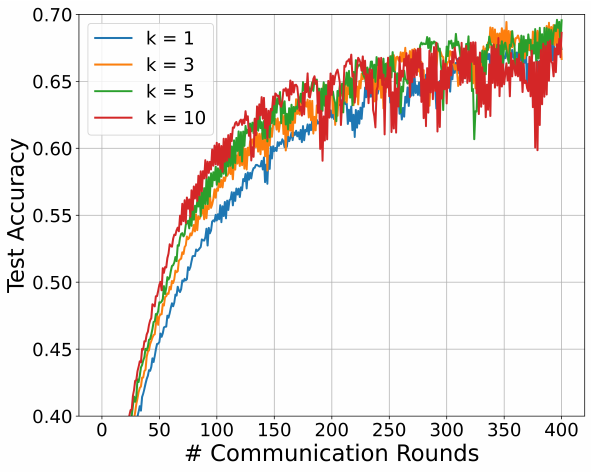}}
\vskip -0.1in
\caption{Comparisons of FedPTR with different trajectory projecting steps $K$ on CIFAR10 with $\text{Dirichlet } \alpha = 0.01$.}
\label{fig:tilde}
\end{center}
\vskip -0.2in
\end{figure}

\noindent\textbf{Different Initialization Strategies. } As presented in \cref{table:random_init},  we compare our strategy that initializes the auxiliary dataset using (partial) local client data with the random initialization strategy.
It can be seen that FedPTR using our proposed initialization strategy obtains better performance on CIFAR10 and CIFAR100 with the improvement of $1.54\%$ and $0.47\%$, respectively. On the other hand, our FedPTR still maintains excellent performance even after adopting the random initialization strategy, which supports the design of server-side FedPTR (FedPTR-S) that performs MTT and optimizes the auxiliary dataset with random initialization in the global server.
\begin{table}[htb]
	\centering 
	\begin{tabular}{l|ccc}
		\toprule
		 {Dataset}  & FMNIST & CIFAR10 & CIFAR100 \\\midrule
             FedPTR - random init & 85.13 &67.50 & 44.51 \\
             FedPTR - our init & 85.11 &69.04 & 44.98 \\
        \bottomrule
	\end{tabular}
 \vskip -0.1in
	\caption{Comparisons of different strategies of auxiliary dataset initialization on CIFAR10 with Dirichlet $\alpha = 0.01$.
	}
	\label{table:random_init}
        \vskip -0.1in
\end{table}

\noindent\textbf{Impact of trajectory matching steps.}
In \cref{table:mtt_iteration_cifar10}, 
we present the performance of our server's side method FedPTR-S on CIFAR10 and CIFAR100, respectively, with varying total trajectory matching steps $H$ used in \cref{MTT}. Specifically, we consider 10 clients, full participation setting with two different non-iid data participation Dirichlet $\alpha=0.01$ and Dirichlet $\alpha=0.04$. We can see that our method is stable with respect to different trajectory matching steps. Even if we reduce the steps from 20 to 5, the test accuracy obtained on CIFAR10 and CIFAR100 only decreases by $1.16\%$ and $0.5\%$, respectively.
\begin{table}[htb]
\begin{center}
    \begin{tabular}{c| c | c c c c }
    \toprule
    \multirow{5}{*}{CIFAR10}&\multirow{2}{*}{Dirichlet $\alpha$} & \multicolumn{4}{c}{Trajectory matching steps $H$ } \\
    \cmidrule{3-6}
     & & 5 & 10 & 20 & 30 \\
    \cmidrule{2-6}
     & $\alpha=0.01$  & $67.00$ & $68.12$ & $68.16$ &$67.04$\\
     & $\alpha=0.04$  & $72.85$ & $71.44$ & $73.07$ &$73.20$\\
    \midrule
     \multirow{2}{*}{CIFAR100} & $\alpha=0.01$  & $44.56$ & $44.53$ & $44.82$ &$44.89$\\
     & $\alpha=0.04$  & $45.60$ & $45.89$ & $46.10$ &$45.85$\\
     \bottomrule
    \end{tabular}
\end{center}
\vskip -0.1in
\caption{Comparisons of FedPTR-S with different trajectory matching steps on CIFAR10 and CIFAR100 with $\text{Dirichlet } \alpha = 0.01$ and $0.04$.}
\label{table:mtt_iteration_cifar10}
\vskip -0.1in
\end{table}

\noindent\textbf{Layer-adaptive $\lambda$ V.S. Fixed $\lambda$.} In \cref{table:lambda}, we have already shown the effectiveness of layer-adaptive $\lambda$, and here we give a further analysis. As shown in \cref{fig:norm}, we present the complete convergence plots of FedPTR with layer-adaptive $\lambda$ and fixed $\lambda$ on CIFAR10 and CIFAR100 with Dirichlet $\alpha=0.01$ and Dirichlet $\alpha=0.04$. It can be seen that adopting layer-adaptive $\lambda$ slightly falls behind at the beginning of the training but obtains better performance in the late phase of the training, indicating that layer-adaptive $\lambda$ can help converge to a better stationary point.
\begin{figure}[htb]
\centering
    \subfigure[CIFAR10, Dir$(0.01)$]{
        \includegraphics[width=3.2cm]{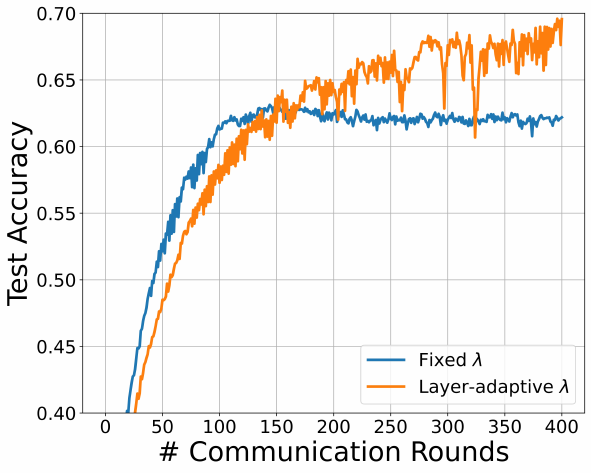}
    }
    \subfigure[CIFAR10, Dir$(0.04)$]{
	\includegraphics[width=3.2cm]{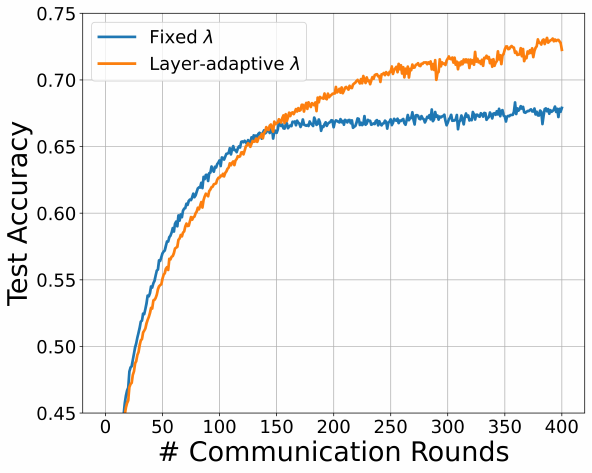}
    }
    \subfigure[CIFAR100, Dir$(0.01)$]{
    	\includegraphics[width=3.2cm]{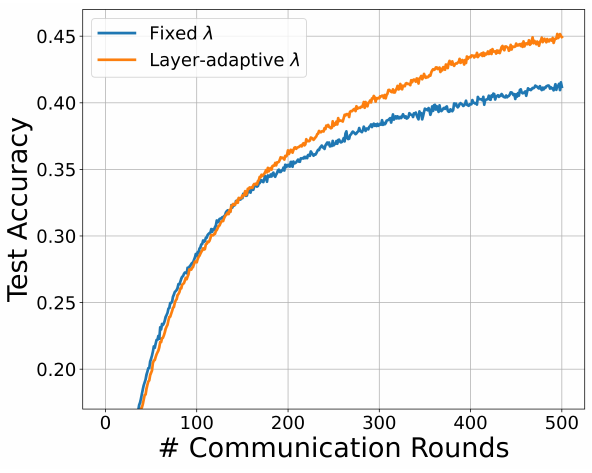}
    }
    \subfigure[CIFAR100, Dir$(0.04)$]{
	\includegraphics[width=3.2cm]{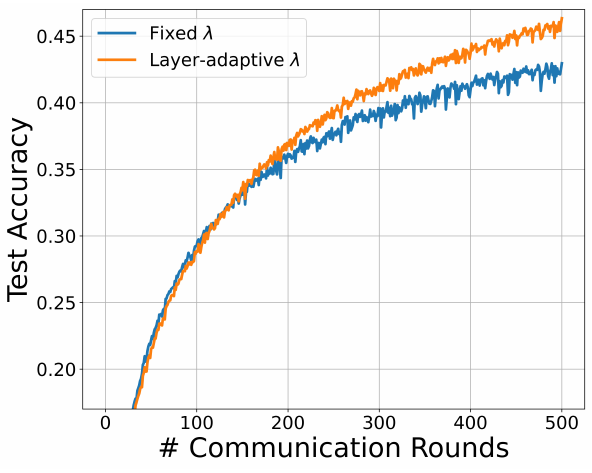}
    }
    \vskip -0.1in
    \caption{Test accuracy against the communication rounds, obtained by FedPTR with different regularizations.}
    \label{fig:norm}
\vskip 0.05in
\end{figure}

\subsection{More Results of Reducing the Updates of Auxiliary Dataset} \label{ap:more_updates}
For the update frequency, we have conducted ablation studies on client's side as shown in Figure \ref{fig:reduce_updates}. Following the same setting, here we present our experiments on server's side in Table \ref{table:more_freq}, which shows that the server-side method FedPTR-S can still maintain a relatively stable performance with reduced updates of auxiliary datasets.
\begin{table}[H]
\begin{center}
    \begin{tabular}{ ccc }
    \toprule
     Update Frequency & Test Acc \\
     \midrule
     per $1$ round & $68.16$ \\
     per $4$ round &  $65.42$  \\ 
     per $10$ round & $65.97$ \\
     per $40$ round & $65.82$ \\
     one-shot &  $65.34$\\
    \bottomrule
    \end{tabular}
\vskip -0.1in
\caption{Comparions of FedPTR-S with different update frequencies of the auxiliary dataset on CIFAR10 with $\text{Dirichlet } \alpha = 0.01$.}
\label{table:more_freq}
\end{center}
\vskip -0.2in
\end{table}

\subsection{More Results of Changing the Size of Auxiliary Dataset.} \label{ap:more_size_data}
We have conducted comparisons of the client's side method FedPTR with different sizes of the auxiliary dataset on CIFAR10 with Dirichlet $\alpha=0.01$ as shown in Table \ref{table:size_cifar10}. Here we present the results on server's side method FedPTR-S with the same setting as shown in Table \ref{table:more_size}. We can observe FedPTR-S could obtain better performance with more auxiliary data, similar to FedPTR.
\begin{table}[htb]
\begin{center}
    \begin{tabular}{ ccc }
    \toprule
     Size of $\mathcal{\tilde{D}}$ & Test Acc \\
     \midrule
     10 & $67.47$ \\
     50 &  $67.64$  \\ 
     100 & $68.16$ \\
     150 & $69.11$ \\
     200 &  $69.75$\\
    \bottomrule
    \end{tabular}
    \vskip -0.1in
\caption{Comparions of FedPTR-S with different sizes of the auxiliary dataset on CIFAR10 and CIFAR100 with $\text{Dir} \alpha = 0.01$.}
\label{table:more_size}
\end{center}
\vskip -0.2in
\end{table}



\subsection{Experiments on TinyImagenet} \label{ap:tiny}
We additionally conduct experiments to test our method on larger TinyImageNet data to demonstrate that our method can indeed extend to larger-scale experiments. We summarize the experimental results with 10 clients and Dir $\alpha =0.01$ in Table \ref{table:more_tinyimgnet}. The results suggest that our method still outperforms other baselines.
\begin{table}[htb]
\begin{center}
    \begin{tabular}{ c c c c c c c  }
    \toprule
     FedAvg & FedProx & Scaffold & FedDyn & FedDC & FedPTR & FedPTR-S \\
     \midrule
     $27.68$ & $27.11$ & $21.55$ & $27.49$ & $23.81$ & $\underline{28.6}$ & $\mathbf{29.6}$ \\
    \bottomrule
    \end{tabular}
\end{center}
\vskip -0.1in
\caption{Comparison of test accuracy for FedPTR (FedPTR-S) and other baselines on TinyImageNet.}
\label{table:more_tinyimgnet}
\end{table}

\section{Privacy leakage of FedPTR}
For privacy leakage of our proposed method, we argue that although we did not add any privacy protection techniques into our framework, FedPTR does not incur extra privacy leakage compared to FedAvg\cite{mcmahan2017communication}/FedDyn\cite{acar2021federated} since all the communications between clients and server are model weights/updates, just like FedAvg\cite{mcmahan2017communication}/FedDyn\cite{acar2021federated}. To empirically demonstrate that FedPTR does not incur extra privacy leakages, we further conduct membership inference attack (MIA) to the final model trained by different methods. We show the MIA results of 10 clients FL with $\alpha = 0.04$ on CIFAR100 dataset in the \cref{tab:mia} using two well-recognized MIA methods: 1) loss-based method and 2) Likelihood ratio attack (LiRA). Here accuracy means whether the MIA method successfully predicts whether a sample is in the original training set or not. In other words, a MIA test accuracy of a method that is closer to 50\% (random guess) indicates this method barely leaks private information. We can observe that FedPTR achieves a similar level of privacy leakage as other baselines such as FedAvg\cite{mcmahan2017communication}/FedDyn\cite{acar2021federated}, as expected.
\begin{table}[htb]
    \centering
    \begin{tabular}{c c c c}
    \toprule
    & FedAvg & FedDyn & FedPTR \\
    \midrule
    loss-based & 56.37 & 61.67 & 57.46 \\
    LIRA & 61.06 & 68.29& 65.59 \\
    \bottomrule
    \end{tabular}
\vskip -0.1in
    \caption{Membership inference attack for global model with $\alpha = 0.04$ on CIFAR100 dataset.}
    \label{tab:mia}
\end{table}

\section{Detailed Description of Data Sampling}
As we have mentioned in experimental settings, for all experiments, we adopt Dirichlet data sampling methods, which are commonly used in federated learning papers\cite{acar2021federated,wang2020federated}. For Dirichlet sampling methods, each client $j$ is allocated a proportion of the samples from each label $k$ according to Dirichlet distribution. Specifically, we sample $p_k \sim Dir_N(\alpha)$ and allocate a $p_{kj}$ proportion of data samples of class $k$ to client $j$, where $Dir_N(\alpha)$ denotes the Dirichlet distribution and $\alpha > 0$ is a concentration parameter. A smaller $\alpha$ indicates a higher non-i.i.d. degree when sampling data, and we focused on the high-level data heterogeneity with $\alpha = 0.01$ and $\alpha = 0.04$. \cref{tab:data_par} shows a typical Dirichlet sampling result for different settings. We can observe that when choosing 40 clients on CIFAR10, it can easily assign 0 data samples to certain clients due to low value and a small number of classes. That is the reason why we do not conclude experiments with more clients on CIFAR10 and FashionMNIST.

\begin{table}[H]
    \centering
    \begin{tabular}{c c c c}
    \toprule
    Dataset & $\alpha$ & \# clients & \# samples in each client \\
    \midrule
    FMNIST & 0.01 & 10 & $12032, 4244, 6003, 6037, 6002, 6165, 5963, 5760, 2602, 5192$ \\
    \midrule
    CIFAR10 & 0.01 & 10& $2972, 9419, 2936, 5779, 4111, 5003, 4949, 4979, 5615, 4237$ \\
    \midrule
    CIFAR100 & 0.01 & 10 & $2690, 5209, 4764, 6463, 4467, 5679, 3983, 5910, 5497, 5338$ \\
    \midrule
    CIFAR100 & 0.01 & 40 & \makecell[c]{ $735, 1371, 1365,  761, 1123, 2194, 2399,  531,  848, 1549,$ \\ $2378,  569, 2007,  984, 518,  608, 1250, 780, 2601, 2784,$ \\ $1408, 1831, 1108, 2166,  523,  814, 1281,  579, 1067, 1172,$ \\ $803, 1085,  902, 2603, 1237,  940,  588,  683, 1246,  609$} \\
    \midrule
    CIFAR10 & $0.01$ & $40$ & \makecell[c]{$0, 3133, 0,  274,  373, 0,  296, 1709, 1114, 4636,$ \\ $5287, 0, 1469, 1684, 0, 415, 3, 3950, 718, 46, $ \\  $ 3355, 29, 7318, 1033, 0, 0, 918, 0, 1342, 0,$ \\ $1, 0, 4979, 101, 1397, 1615, 5, 2798, 2, 0$} \\
    \bottomrule
    \end{tabular}
\vskip -0.1in
    \caption{Dirichlet sampling result for different settings.}
    \label{tab:data_par}
\vskip -0.1in
\end{table}

\section{Naive Attempt: Combining Distilled Dataset with Local Data}
In this section, we present a naive attempt of directly combining distilled dataset with local data and analyze its failed case. Note that MTT\cite{cazenavette2022dataset} is originally proposed for distilling a small synthetic dataset. Thus a trivial idea for tackling data heterogeneity is to apply MTT\cite{cazenavette2022dataset} to obtain the synthetic dataset and combine those distilled data with the available local data to train FedProx\cite{li2020federatedprox} for relieving the data heterogeneity. However, this trivial design is not ideal for addressing the data heterogeneity issue especially when data are highly non-i.i.d. distributed among clients, as demonstrated in \cref{tab:direct}. The result shows that simply adding the synthetic data by MTT\cite{cazenavette2022dataset} for FedProx training obtains even worse performance than FedAvg. This is primarily due to the fact that the synthetic data obtained by MTT\cite{cazenavette2022dataset} under federated settings (original MTT\cite{cazenavette2022dataset} can use multiple training trajectories and apply restart strategy which is not realistic under federated setting) may not be ideal enough to fully represent the information inside the original training data. Therefore, directly training on the synthetic data may actually make the training worse. While in our case, we only require the auxiliary data produced by MTT\cite{cazenavette2022dataset} to guide the next-step training trajectory prediction (which is exactly what our MTT\cite{cazenavette2022dataset} is trained for, i.e., to match recent training trajectories). Thus we believe that our methods are fundamentally different from simply combining the previous methods FedProx\cite{li2020federatedprox} and MTT\cite{cazenavette2022dataset}.
\begin{table}[H]
    \centering
    \begin{tabular}{c c c c}
    \toprule
    & FMNIST & CIFAR10 & CIFAR100 \\
    \midrule
    Extra synthetic data & 76.23 & 50.08 & 27.78 \\
    FedAvg & 83.69 & 55.45& 37.32 \\
    FedPTR & 85.11 & 69.04 & 44.98 \\
    FedPTR-S & 85.48 & 68.16 & 44.82 \\
    \bottomrule
    \end{tabular}
\vskip -0.1in
    \caption{Comparsion of directly using auxiliary dataset and FedPTR on three datasets.}
    \label{tab:direct}
\vskip -0.1in
\end{table}

\end{document}